\newcommand{\argmin}{\mathop{\mathrm{argmin}}}
\DeclareMathOperator{\Dir}{Dir}
\DeclareMathOperator{\Uniform}{Uniform}
\newcommand\halfclosed[2]{\ensuremath{(#1,#2]}}
\newcommand{\rev}[1]{#1}
\theoremstyle{plain}
\theoremstyle{definition}
\theoremstyle{remark}
\icmltitlerunning{Conformal Prediction as Bayesian Quadrature}
\begin{document}

\twocolumn[
\icmltitle{Conformal Prediction as Bayesian Quadrature}

\begin{icmlauthorlist}
\icmlauthor{Jake C. Snell}{cs}
\icmlauthor{Thomas L. Griffiths}{cs,psych}
\end{icmlauthorlist}

\icmlaffiliation{psych}{Department of Psychology, Princeton University}
\icmlaffiliation{cs}{Department of Computer Science, Princeton University}

\icmlcorrespondingauthor{Jake C. Snell}{jsnell@princeton.edu}

\icmlkeywords{Conformal Prediction, Bayesian Quadrature, Uncertainty Quantification, Probabilistic Numerics}

\vskip 0.3in
]

\printAffiliationsAndNotice{}  %

\begin{abstract}
As machine learning-based prediction systems are increasingly used in
high-stakes situations, it is important to understand how such predictive models
will perform upon deployment. Distribution-free uncertainty quantification
techniques such as conformal prediction provide guarantees about the loss
black-box models will incur even when the details of the models are hidden.
However, such methods are based on frequentist probability, which unduly limits
their applicability.  We revisit the central aspects of conformal prediction
from a Bayesian perspective and thereby illuminate the shortcomings of
frequentist guarantees. We propose a practical alternative based on Bayesian
quadrature that provides interpretable guarantees and offers a richer
representation of the likely range of losses to be observed at test time.
\end{abstract}

\section{Introduction}

Machine learning systems based on deep learning are increasingly used in high-stakes settings, such as medical diagnosis or financial applications. These settings impose unique constraints on the performance of these systems: we want them to produce good outcomes in the aggregate, but also do so fairly and with a guarantee of a low probability of harm. However, predictive models based on deep learning can be difficult to interpret, and commercial models increasingly tend to offer little information about the techniques used in training. This creates a new challenge: How can we flexibly and reliably quantify the
suitability of a model for deployment without making too many assumptions about how the model was trained or in which settings it will be used?

Recent research on quantifying uncertainty has employed methods based on conformal prediction~\citep{vovk2005algorithmic}, which aim to provide guarantees for model performance in a distribution-free way. However, these techniques are based on ideas from frequentist statistics, making it difficult to %
incorporate prior knowledge that might be available about specific models. For example, in a particular setting we might have access to some information about the distribution of the data that is likely to be encountered, and can construct tighter guarantees on the performance of models by making use of this information. Moreover, they focus on controlling the expected loss averaged over many unobserved datasets rather than focusing on the actual set of observations.

In this paper, we show how methods for guaranteeing model performance can be understood and extended by viewing them from a Bayesian perspective. We develop a framework in which we explicitly model uncertainty in the quantile values associated with particular observations, providing a nonparametric tool for characterizing possible distributions where the model might be deployed that is appropriately constrained by observed data. This framework allows us to draw upon methods from the fields of statistical prediction analysis \citep{aitchison1975statistical} and probabilistic numerics \citep{cockayne2019bayesian,hennig2022probabilistic} to develop  guarantees that are interpretable and make adaptive use of available information.

We show that two popular uncertainty quantification methods, split conformal prediction \citep{vovk2005algorithmic, papadopoulos2002inductive} and conformal risk control \citep{angelopoulos2024conformal}, can both be recovered as special cases of our framework. Our approach gives a more complete characterization of the performance of these approaches, as we are able to determine the full distribution of possible outcomes rather than a single point estimate. Since our approach is grounded in Bayesian probability, we can easily incorporate knowledge relevant to evaluating the performance of these models when it is present, such as monotonicity or distributional assumptions, while defaulting to existing methods when absent. Our results show that Bayesian probability, while it is often discarded due to the apparent need to specify prior distributions, is actually well-suited for distribution-free uncertainty quantification.

\section{Background}

Conformal prediction methods apply a wrapper on top of black-box predictive models to be able to subject them to statistical analysis. In order to generate meaningful predictions about future performance, it is assumed that we have access to a small calibration dataset that is representative of the deployment conditions. %
Performance on this dataset then provides the foundation for generating predictions about future performance. We begin by reviewing existing current distribution-free uncertainty quantification techniques and Bayesian quadrature methods.

\subsection{Distribution-free Uncertainty Quantification Techniques}\label{sec:background_conformal}

Uncertainty quantification techniques provide guarantees on the future performance of a black-box predictive model mapping inputs $X$ to outputs $Y$ based on a calibration set consisting of $X_1, \ldots, X_n$ and $Y_1, \ldots, Y_n$. Different approaches do so in different ways.
For more information on these techniques, refer to \citet{shafer2008tutorial} or \citet{angelopoulos2023conformal}.
\paragraph{Split Conformal Prediction} The goal of Split Conformal Prediction~\citep{vovk2005algorithmic,papadopoulos2002inductive} is to generate a prediction set or interval that contains the ground-truth output with high probability. This is often expressed in terms of the coverage level $1 - \alpha$. It relies on a score function $s(x, y)$ which measures the disagreement between a predictor's output and the ground truth.

The conformal guarantee is
\begin{equation}
    \Pr\left(Y_{n+1} \notin \mathcal{C}(X_{n+1}) \right) \le \alpha \label{eq:conformal_guarantee},
\end{equation}
where
\begin{equation}
    \mathcal{C}(X_{n+1}) = \left\{ y : s(X_{n+1}, y) \le \hat{q} \right\}\label{eq:conformal_prediction_set}
\end{equation}
and
    $\hat{q}$ is the $\frac{\lceil (n+1)(1-\alpha) \rceil}{n}$ quantile of  $s_1 = s(X_1, Y_1), \ldots, s_n = s(X_n, Y_n)$.
Here, $\mathcal{C}(X_{{n+1}})$ is a prediction set or interval which aims to include the ground-truth output.
\paragraph{Conformal Risk Control} In Conformal Risk Control~\citep{angelopoulos2024conformal}, the goal is to generalize conformal prediction to more general loss functions that are monotonic functions of a single parameter $\lambda$. Conformal Risk Control (CRC) proceeds by viewing the coverage guarantee~\eqref{eq:conformal_guarantee} as the expected value of a 0-1 loss. It is assumed that the maximum possible value of the loss is $B$ and that the problem is ``achievable'' by design in that there exists some setting $\lambda_\text{max}$ that satisfies the conformal guarantee. Additionally, each loss function $L_i(\lambda)$ is assumed to be a monotonic non-increasing function of $\lambda$. The guarantee offered by Conformal Risk Control is of the  form
\begin{equation}
    E \left( \ell(\mathcal{C}_{\hat{\lambda}}(X_{n+1}), Y_{n+1}) \right) \le \alpha,\label{eq:crc_guarantee}
\end{equation}
where
\begin{equation}
    \hat{\lambda} = \inf \left\{ \lambda : \frac{n}{n+1} \hat{R}_n(\lambda) + \frac{B}{n+1} \le \alpha \right\} \label{eq:crc_estimator}
\end{equation}
and $\hat{R}_n(\lambda) = \frac{1}{n} \sum_{i=1}^n L_i(\lambda)$ is the empirical risk.

\subsection{Bayesian Quadrature}\label{sec:bayesquad}

Bayesian quadrature~\citep{diaconis1988bayesian,ohagan1991bayes} is a general technique for evaluating integrals that allows for uncertainty in the integrand. It estimates the value of an integral $\int_{a}^{b} f(x) \, dx$ by the following four steps: (1) place a prior $p(f)$ on functions, (2) evaluate $f$ at $x_{1}, x_{2}, \ldots, x_{n}$, (3) compute a posterior given the observed values of $f$ by Bayes' rule, and (4) estimate $\int_{a}^{b} f(x) \, dx$. Suppose that $f(x_{i}) = y_{i}$ for $i = 1, 2, \ldots, n$. The posterior over $f$ is %
\begin{equation}
  p(f \mid x_{1:n}, y_{1:n}) \propto p(f) \prod_{i=1}^{n} \delta(y_{i} - f(x_{i})),\label{eq:bayesquad_posterior}
\end{equation}
where $\delta(\cdot)$ is the Dirac delta function. The posterior mean then provides an estimate for the integral:
\begin{align}
  \int_{a}^{b} f(x) \, dx &\approx \int_{a}^{b} f_{n}(x) \,dx, \text{ where } \\
  f_{n}(t) &= E(f(t) \mid x_{1:n}, y_{1:n} ).
\end{align}
It has been demonstrated that many classical quadrature procedures such as the trapezoid rule can be recovered by placing a Gaussian process prior on functions~\citep{karvonen2017classical}.

\subsection{Summary and Prospectus}%

Bayesian quadrature provides an illustration of how a primarily numerical method can be connected to Bayesian inference, and in doing so potentially admit additional information about the underlying function that can be incorporated via a prior distribution. In next section, we will see how a similar approach can be applied to conformal prediction, identifying a Bayesian framework that reproduces existing distribution-free uncertainty quantification techniques. The challenge in doing so is that we want guarantees of the style obtained from Bayesian models, but we want to make the approach as general as possible in its assumptions about the underlying distribution. We solve this problem via an approach inspired by probabilistic numerics to construct a nonparametric characterization of the underlying distribution based on the calibration set.

\section{Decision-theoretic Formulation}\label{sec:decisiontheory}
In this section we show how split conformal prediction and conformal risk control can be formulated as instances of a general decision problem.

Let $z = (z_{1}, \ldots, z_{n})$ be a set of calibration data where each observation $z_{i} = (x_{i}, y_{i})$ consists of an input and a ground truth label. Let $\theta$ denote the true state of nature that defines a shared density $f(z_{i} \mid \theta)$ for the data.\footnote{In the interest of notational convenience, we assume densities and integrals over $z_{i}$ but these may be replaced by probability mass functions and summations as appropriate.} A new test point $z_{\text{new}}$ is assumed to have the same distribution. Let $\lambda$ be a control parameter (e.g.\ threshold) that must be chosen based on the calibration data. We assume the presence of a loss function $L(\theta, \lambda)$ which quantifies the loss incurred by selecting $\lambda$ when the true state of nature is $\theta$.

The decision-theoretic goal is to choose a decision rule $\lambda(z)$ that controls the \emph{risk}:
\begin{equation}
  R(\theta, \lambda) = \int L(\theta, \lambda(z)) f(z \mid \theta) \, dz.\label{eq:risk_definition}
\end{equation}
It is often desirable to choose $\lambda$ so that it is robust to any possible state of nature $\theta$. The \emph{maximum risk} is defined as
\begin{equation}
  \bar{R}(\lambda) = \sup_{\theta} R(\theta, \lambda).\label{eq:maximum_risk}
\end{equation}
In distribution-free uncertainty quantification applications, it is often trivial to achieve arbitrarily low risk (for example by forming prediction sets covering the entire output space). We thus want to find decision rules whose risk is upper bounded by a constant $\alpha$:
\begin{equation}
    \bar{R}(\lambda) \le \alpha,\label{eq:alpha_acceptable_risk}
\end{equation}
and use another criterion (such as expected prediction set size) to select among these.  We call a rule that satisfies~\eqref{eq:alpha_acceptable_risk} an \emph{$\alpha$-acceptable decision rule}.

\subsection{Recovering Split Conformal Prediction}
We now show how split conformal prediction is a special case of this decision-theoretic problem. Let $L_{\text{scp}}(\theta, \lambda)$ be the \emph{miscoverage loss}:
\begin{align}
  L_{\text{scp}}(\theta, \lambda) &= \Pr \{ s(z_{\text{new}}) > \lambda \}\label{eq:miscoverage_loss}\\
  &= 1 - \Pr \{ s(z_{\text{new}}) \le \lambda \} \nonumber \\
  &= 1 - \int \mathbbm{1} \{ s(z_{\text{new}}) \le \lambda \} f(z_{\text{new}} \mid \theta) \, d z_{\text{new}} \nonumber,
\end{align}
where $s$ is an arbitrary nonconformity function.
\begin{restatable}[]{proposition}{scprule}\label{thm:scprule}
  Define $s_{i} \triangleq s(z_{i})$ for $i = 1, \ldots, n$ and let $s_{(1)} \le s_{(2)} \le \ldots \le s_{(n)}$ be the corresponding order statistics. Let $\lambda_{\mathrm{scp}}$ be the following decision rule:
  \begin{equation}
    \lambda_{\mathrm{scp}} = \begin{cases}s_{(\lceil (n + 1) (1 - \alpha) \rceil)}, & \mathrm{if}\, \lceil (n + 1)(1 - \alpha)\rceil \le n \\ \infty, &\mathrm{otherwise}. \end{cases}\label{eq:scprule_formula}
  \end{equation}
  Then $\lambda_{\mathrm{scp}}$ is an $\alpha$-acceptable decision rule for the miscoverage loss $L_{\mathrm{scp}}$ defined in~\eqref{eq:miscoverage_loss}.
\end{restatable}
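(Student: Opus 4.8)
The plan is to reduce the statement to the classical exchangeability argument for conformal coverage, after first unpacking what $\bar{R}(\lambda_{\mathrm{scp}}) \le \alpha$ means for this particular loss. The key observation is that because the new point $z_{\text{new}}$ is drawn from the same density $f(\cdot \mid \theta)$ independently of the calibration set $z$, the risk at $\lambda_{\mathrm{scp}}$ is a nested expectation that collapses into a single joint probability:
\[
  R(\theta, \lambda_{\mathrm{scp}}) = E_{z}\!\left[\Pr\{s(z_{\text{new}}) > \lambda_{\mathrm{scp}}(z)\}\right] = \Pr\{s_{\text{new}} > \lambda_{\mathrm{scp}}\},
\]
where the outermost probability is taken over all $n+1$ i.i.d.\ scores $s_{1}, \ldots, s_{n}, s_{\text{new}}$ drawn from $f(\cdot \mid \theta)$. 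This is exactly the miscoverage probability studied in split conformal prediction, so the task becomes bounding it by $\alpha$ uniformly in $\theta$.

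Next I would dispatch the degenerate case. When $\lceil (n+1)(1-\alpha) \rceil > n$ we have $\lambda_{\mathrm{scp}} = \infty$, the event $\{s_{\text{new}} > \infty\}$ is empty, and the risk is identically zero, so \eqref{eq:alpha_acceptable_risk} holds trivially for every $\theta$. For the main case, set $k = \lceil (n+1)(1-\alpha) \rceil \le n$, so that $\lambda_{\mathrm{scp}} = s_{(k)}$. Assuming for the moment that the scores are almost surely distinct, the event $\{s_{\text{new}} > s_{(k)}\}$ is precisely the event that the rank of $s_{\text{new}}$ among the combined sample is at least $k+1$ (equivalently, that strictly more than $k$ of the $n+1$ scores fall at or below $s_{\text{new}}$). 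Exchangeability of the $n+1$ i.i.d.\ scores makes this rank uniform on $\{1, \ldots, n+1\}$, whence $\Pr\{s_{\text{new}} > s_{(k)}\} = \frac{n+1-k}{n+1}$. The ceiling in the definition of $k$ guarantees $k \ge (n+1)(1-\alpha)$, which rearranges to exactly $\frac{n+1-k}{n+1} \le \alpha$. Since this bound does not depend on $\theta$, taking the supremum over $\theta$ yields $\bar{R}(\lambda_{\mathrm{scp}}) \le \alpha$.

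The main obstacle is handling ties in the scores, which break the clean uniform-rank computation. I would address this either by assuming the score distribution is atomless (so ties occur with probability zero and the argument above is exact), or, in the general case, by noting that ties can only decrease the probability of the strict inequality $s_{\text{new}} > s_{(k)}$ relative to a randomized tie-breaking version, so the upper bound $\alpha$ is preserved. The only other point requiring care is justifying the collapse of the nested expectation into a joint probability, which follows from Fubini's theorem together with the independence of $z_{\text{new}}$ and $z$ under $f(\cdot \mid \theta)$.
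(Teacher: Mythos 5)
Your proposal is correct and follows essentially the same route as the paper: the paper simply cites the split conformal coverage guarantee $\Pr(s_{\text{new}} \le \hat{q}_{1-\alpha}) \ge 1-\alpha$ from \citet{lei2018distributionfree} and translates it into the risk statement $\bar{R}(\lambda_{\text{scp}}) \le \alpha$, whereas you inline the standard proof of that guarantee (collapsing the nested expectation into a joint probability, the uniform-rank exchangeability computation giving $\Pr\{s_{\text{new}} > s_{(k)}\} = \frac{n+1-k}{n+1} \le \alpha$, the trivial $\lambda_{\mathrm{scp}} = \infty$ case, and the observation that ties only make the bound conservative). Since that exchangeability argument is exactly what underlies the cited result, your proof is a self-contained version of the paper's, with the uniformity in $\theta$ handled identically.
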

\begin{proof}
  Proofs for all theoretical results may be found in Appendix~\ref{sec:appendix_proofs}.
\end{proof}
Therefore the prediction set can be constructed as in~\eqref{eq:conformal_prediction_set}:
\begin{equation}
  \mathcal{C}_{\text{scp}}(x_{\text{new}}) = \{ y \in \mathcal{Y} : s(x_{\text{new}}, y) \le \lambda_{\text{scp}}\},
\end{equation}
and by \Cref{thm:scprule}, $\mathcal{C}_{\text{scp}}$ satisfies the conformal guarantee from~\eqref{eq:conformal_guarantee}.

\subsection{Recovering Conformal Risk Control}
Conformal risk control generalizes split conformal prediction by considering losses that are monotonic non-increasing functions of a single parameter $\lambda$.
\begin{equation}
  L_{\text{crc}}(\theta, \lambda) = \int \ell(z_{\text{new}}, \lambda) f(z_{\text{new}} \mid \theta)\, d z_{\text{new}},\label{eq:crc_loss}
\end{equation}
where $\ell(z_{\text{new}}, \lambda)$ is an individual loss function that is monotonically non-increasing in $\lambda$.
\begin{restatable}[]{proposition}{crcrule}\label{thm:crcrule}
  Let $\lambda_{\mathrm{crc}}$ be the following decision rule:
  \begin{equation}
    \lambda_{\mathrm{crc}} = \inf \left\{ \lambda : \frac{1}{n+1} \left(  \sum_{i=1}^{n} \ell(z_{i}, \lambda) + B \right) \le \alpha \right\}.\label{eq:crcrule_formula}
  \end{equation}
  Then $\lambda_{\mathrm{crc}}$ is an $\alpha$-acceptable decision rule for $L_{\mathrm{crc}}$ defined in~\eqref{eq:crc_loss}.
\end{restatable}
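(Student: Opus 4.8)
The plan is to reduce the maximum-risk bound to the classical conformal risk control argument, which rests on exchangeability together with a symmetrization step. First I would unfold the definitions: combining the risk \eqref{eq:risk_definition} with the CRC loss \eqref{eq:crc_loss}, and using that $z_{\text{new}}$ is drawn independently of the calibration data $z$ given $\theta$, the risk collapses by Fubini into a single joint expectation,
\begin{equation*}
  R(\theta, \lambda_{\mathrm{crc}}) = E\left[ \ell(z_{\text{new}}, \lambda_{\mathrm{crc}}(z)) \right],
\end{equation*}
where $z_{1}, \ldots, z_{n}, z_{\text{new}}$ are i.i.d.\ from $f(\cdot \mid \theta)$ and hence exchangeable. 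Because the quantity of interest is the supremum over $\theta$ in \eqref{eq:maximum_risk}, it suffices to bound this expectation by $\alpha$ for an arbitrary fixed $\theta$.

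Second, I would introduce the symmetric ``full-sample'' threshold that treats the test point on equal footing with the calibration points. Writing $R_{n+1}(\lambda) = \frac{1}{n+1}\left( \sum_{i=1}^{n} \ell(z_i, \lambda) + \ell(z_{\text{new}}, \lambda) \right)$, I define
\begin{equation*}
  \lambda^{\star} = \inf\{ \lambda : R_{n+1}(\lambda) \le \alpha \}.
\end{equation*}
The key comparison is $\lambda_{\mathrm{crc}} \ge \lambda^{\star}$: since each individual loss is bounded by $B$, we have $\ell(z_{\text{new}}, \lambda) \le B$, so $R_{n+1}(\lambda)$ is pointwise dominated by the expression $\frac{1}{n+1}\left( \sum_{i} \ell(z_i, \lambda) + B \right)$ defining $\lambda_{\mathrm{crc}}$ in \eqref{eq:crcrule_formula}. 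The constraint set for $\lambda_{\mathrm{crc}}$ is therefore contained in that for $\lambda^{\star}$, and the infimum over a smaller set is no smaller. Because each $\ell(z_{\text{new}}, \cdot)$ is non-increasing, $\lambda_{\mathrm{crc}} \ge \lambda^{\star}$ gives $\ell(z_{\text{new}}, \lambda_{\mathrm{crc}}) \le \ell(z_{\text{new}}, \lambda^{\star})$, so it is enough to bound $E[\ell(z_{\text{new}}, \lambda^{\star})]$.

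Third comes the symmetrization. Because $\lambda^{\star}$ depends on the $n+1$ losses only through the permutation-invariant quantity $R_{n+1}$, exchangeability lets me swap $z_{\text{new}}$ with a generic $z_j$ without changing either the joint law or $\lambda^{\star}$, so $E[\ell(z_{\text{new}}, \lambda^{\star})] = E[\ell(z_j, \lambda^{\star})]$ for every $j$. Averaging over $j \in \{1, \ldots, n, \text{new}\}$ then yields $E[\ell(z_{\text{new}}, \lambda^{\star})] = E[R_{n+1}(\lambda^{\star})]$. Finally, the definition of $\lambda^{\star}$ as an infimum, combined with right-continuity of $\lambda \mapsto R_{n+1}(\lambda)$ and the achievability assumption (ensuring the constraint set is nonempty), gives $R_{n+1}(\lambda^{\star}) \le \alpha$ pointwise, whence $E[\ell(z_{\text{new}}, \lambda^{\star})] \le \alpha$. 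Taking the supremum over $\theta$ establishes the $\alpha$-acceptability condition \eqref{eq:alpha_acceptable_risk}.

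I expect the main obstacle to be the symmetrization step and the care it requires. One must verify that $\lambda^{\star}$ is genuinely a symmetric function of all $n+1$ losses, so that exchangeability can legitimately be invoked to interchange $z_{\text{new}}$ with a calibration point, and one must treat the infimum rigorously—confirming via right-continuity and achievability that the constraint is actually attained at $\lambda^{\star}$ rather than only approached in a limit. By contrast, the domination-and-monotonicity comparison $\lambda_{\mathrm{crc}} \ge \lambda^{\star}$ is the routine part once the bound $\ell \le B$ is in hand.
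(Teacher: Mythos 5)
Your proof is correct, but it takes a genuinely different route from the paper's: the paper's entire proof of Proposition~\ref{thm:crcrule} is an appeal to Theorem~1 of \citet{angelopoulos2024conformal}, identifying $L_i(\lambda) = \ell(z_i, \lambda)$ for $i = 1, \ldots, n$ and $L_{n+1}(\lambda) = \ell(z_{\text{new}}, \lambda)$, noting that $\lambda_{\mathrm{crc}}$ coincides with the $\hat{\lambda}$ of that theorem, and concluding $R(\theta, \lambda_{\mathrm{crc}}) \le \alpha$ uniformly in $\theta$, hence $\bar{R}(\lambda_{\mathrm{crc}}) \le \alpha$. What you have done is reconstruct from scratch essentially the proof of that cited theorem: the oracle threshold $\lambda^{\star}$ built from the full-sample average $R_{n+1}$, the domination $\ell(z_{\text{new}}, \lambda) \le B$ giving containment of constraint sets and hence $\lambda_{\mathrm{crc}} \ge \lambda^{\star}$, monotonicity transferring the bound to $\ell(z_{\text{new}}, \lambda^{\star})$, the symmetrization $E[\ell(z_{\text{new}}, \lambda^{\star})] = E[R_{n+1}(\lambda^{\star})]$ (valid precisely because $\lambda^{\star}$ is a permutation-symmetric function of the $n+1$ points), and right-continuity plus achievability yielding $R_{n+1}(\lambda^{\star}) \le \alpha$ pointwise; all of these steps check out. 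What your version buys is self-containedness, and it correctly surfaces two hypotheses the paper leaves implicit in the citation: right-continuity of $\lambda \mapsto \ell(z, \lambda)$, which is assumed in the CRC theorem but never restated in this paper's Section~\ref{sec:decisiontheory} setup and is genuinely needed for the constraint to be attained at the infimum, and nonemptiness of the constraint set via achievability. What the paper's version buys is brevity, and it silently inherits the cited theorem's handling of the edge case where the set defining $\lambda_{\mathrm{crc}}$ is empty, so that $\lambda_{\mathrm{crc}} = \inf \emptyset = \infty$; in that case your monotonicity step $\ell(z_{\text{new}}, \lambda_{\mathrm{crc}}) \le \ell(z_{\text{new}}, \lambda^{\star})$ should be read in the limiting sense $\lim_{\lambda \to \infty} \ell(z_{\text{new}}, \lambda) \le \ell(z_{\text{new}}, \lambda^{\star})$ --- a minor point worth one sentence in a polished write-up, but not a gap.
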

Note in particular that when $\ell(z, \lambda)$ can be expressed in the form  $\ell(\mathcal{C}_{\lambda}(x_{n + 1}), y_{n+1})$, this recovers the conformal risk control guarantee from~\eqref{eq:crc_guarantee}.

\section{Our Approach}\label{sec:our_method}

We introduce our approach by reinterpreting \rev{split conformal prediction and conformal risk control as special cases} of a more general Bayesian procedure. In order to do so, we borrow ideas from both Bayesian quadrature~\citep{diaconis1988bayesian,ohagan1991bayes} and distribution-free tolerance regions~\citep{guttman1970statistical}. Bayesian quadrature (Section~\ref{sec:bayesquad}) solves a numerical integration problem by placing a prior on functions and using Bayesian inference to compute a distribution over the value of the integral. Distribution-free tolerance regions provide a distribution over quantile spacings that holds regardless of the original underlying distribution. Putting these ideas together allows us to extend conformal prediction by producing bounds on expected loss tailored to the actual losses observed in the calibration set.

The remainder of this section is structured as follows. In Section~\ref{sec:bayes_risk}, we discuss the relationship between risk control and Bayes risk.  In Section~\ref{sec:methods_goal}, we describe a general approach for using Bayesian quadrature to bound the posterior risk. In Section~\ref{sec:consquantile}, we make the quadrature ``distribution-free'' by removing the dependence on a prior over functions. In Section~\ref{sec:methods_quantiles} we handle uncertainty in the evaluation locations of the function by applying results that characterize the spacing between consecutive quantiles. In Section~\ref{sec:dfub_construction}, we show how to use these results to produce an upper bound on the expected loss. Finally, in Section~\ref{sec:our_recovery}, we show how previous conformal prediction techniques can be viewed as a special case of our procedure that only considers the expectation of the posterior loss.

\subsection{Bayes Risk}\label{sec:bayes_risk}
The risk $R(\theta, \lambda)$ measures the expected loss for one who already knows the true state of nature $\theta$ but not the particular data observed. However, in practical applications the situation is reversed: we \emph{do} know the observed data but there is uncertainty about the state of nature. Therefore, we want a decision rule that protects against high loss for a range of possible $\theta$. This idea is expressed as the \emph{integrated risk}:
\begin{equation}
  r(\pi, \lambda) = \int R(\theta, \lambda) \pi(\theta) \, d\theta,\label{eq:bayes_risk}
\end{equation}
where the prior $\pi(\theta) \ge 0$ measures the relative importance of the different possible states of nature. It is well-known that the minimizer of the integrated risk is the so-called \emph{Bayes decision rule}:
\begin{align}
  \lambda^{\pi} &\triangleq \argmin_{\lambda} r(\lambda \mid z),
\end{align}
where $r(\lambda \mid z)$ is the \emph{posterior risk}
\begin{equation}
  r(\lambda \mid z) = E(L_\lambda \mid z) = \int L(\theta, \lambda(z)) \pi(\theta \mid z) \,d\theta,\label{eq:posterior_risk}
\end{equation}
and $\pi(\theta \mid z) \propto \pi(\theta) f(z \mid \theta)$. Interestingly, the worst-case integrated risk of a decision rule is identical to its maximum risk~\eqref{eq:maximum_risk}
\begin{equation}
  \bar{r}(\lambda) \triangleq \sup_{\pi} r(\pi, \lambda) = \sup_{\theta} R(\theta, \lambda) = \bar{R}(\lambda).\label{eq:worst_integrated_risk}
\end{equation}
We can therefore focus on bounding the worst-case integrated risk $\bar{r}(\lambda)$, since this will also bound the maximum risk $\bar{R}(\lambda)$.

\subsection{Reformulation as Bayesian Quadrature}\label{sec:methods_goal}
We now turn our attention to finding $\lambda$ minimizing the posterior risk~\eqref{eq:posterior_risk}. Consider risks that can be expressed as the expectation over individual losses:
\begin{equation}
    L(\theta, \lambda) = \int \ell( z_{\text{new}}, \lambda) f(z_{\text{new}} \mid \theta) \, dz_{\text{new}}.
\end{equation}
It is well-known that the expectation of a random variable is equal to the definite integral of its quantile function over its domain~\citep[p.~116]{shorack2000probability}. Consider the distribution function of individual losses induced by $\lambda$ for a particular value of $\theta$:
\begin{equation}
    F(\ell) \triangleq \Pr \{ \ell(z_\text{new}, \lambda) \le \ell \mid \theta \}
\end{equation}
The corresponding quantile function is:
\begin{equation}
    K(t) \equiv F^{-1}(t) = \inf \{ \ell: F(\ell) \ge t \},
\end{equation}
and the expected loss given $K$ is simply $\int_0^1 K(t) \,dt$.

Instead of performing posterior inference over $\theta$, we propose to take an approach inspired by Bayesian quadrature that places a corresponding prior over $K$. \rev{Figure~\ref{fig:prob_numerics} shows a schematic overview of Bayesian quadrature in this setting and how our proposed approach differs.}
\begin{figure*}[t]
    \centering
    \includegraphics[width=\linewidth]{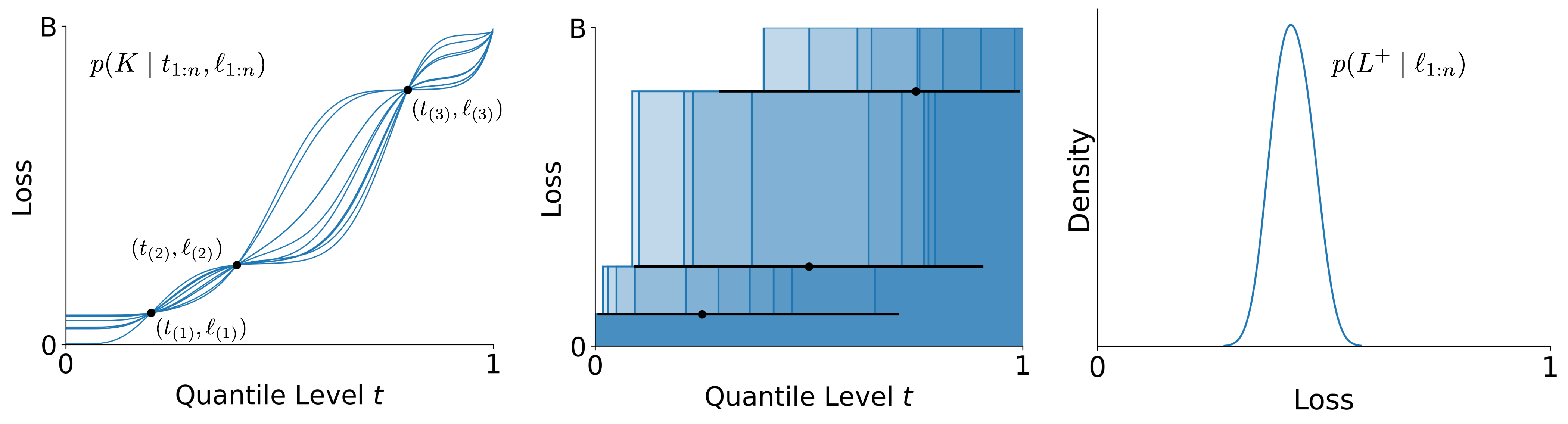}
    \vspace*{-6mm}
    \caption{\rev{Overview of our approach. Left: Standard Bayesian quadrature places a prior over the quantile function of the loss distribution. The posterior is formed via Bayes' rule after observing a set of loss values and quantile levels. However, in practice quantile levels are not directly observed. Middle: Our approach combines properties of quantile spacings with a right rectangular integration rule to construct an upper bound on the posterior distribution of the expected loss. Randomly sampled spacings and corresponding quantile functions are shown in blue along with a 95\% credible interval for each quantile level in black. Right: The posterior distribution for a random variable $L^+$ that upper bounds the expected loss is constructed by integrating over the unknown quantile levels.} }\label{fig:prob_numerics}
\end{figure*}
The posterior risk given the observed individual losses $\ell_i \triangleq \ell(z_i, \lambda)$ for $i = 1, \ldots, n$ becomes:
\begin{equation}
    E(L \mid \ell_{1:n}) = \int J[K]  p(K \mid \ell_{1:n}) \,dK,
\end{equation}
where $J[K] \triangleq \int_0^1 K(t)\,dt$ and we have suppressed the dependence on $\lambda$ for notational convenience. The posterior over quantile functions can be expressed as:
\begin{align}
\hspace{-5mm}    &p(K \mid \ell_{1:n}) = \int p(K \mid t_{1:n}, \ell_{1:n}) p(t_{1:n} \mid \ell_{1:n}) \, dt_{1:n} \\
    &p(K \mid t_{1:n}, \ell_{1:n}) \propto \pi(K) \prod_{i=1}^n \delta(\ell_i - K(t_i)).
\end{align}

This resembles the Bayesian quadrature problem from Section~\ref{sec:bayesquad}, except the evaluation sites $t_{1}, \ldots, t_{n}$ are unknown.  Fortunately, the distribution of $t_{1}, \ldots, t_{n}$ is independent of the true distribution of the losses, as we shall now show.

\subsection{Elimination of the Prior Distribution}\label{sec:consquantile}
In order to address the dependence of the posterior risk on the prior $\pi(K)$, we derive an upper bound on the posterior expected loss. The bound takes the form of a weighted sum of the observed losses, where the weights are determined by the spacing between consecutive quantiles.
\begin{restatable}[]{theorem}{consquantile}\label{thm:consquantile}
  Let $t_{{(0)}} = 0$, $t_{{(n+1)}} = 1$, and $\ell_{{(n+1)}} = B$. Then
  \begin{equation}
    \sup_\pi E(L \mid t_{1:n}, \ell_{1:n}) \le \sum_{i=1}^{n+1} u_{i}\ell_{{(i)}},
  \end{equation}
  where $u_{i} = t_{(i)} - t_{(i-1)}$.
\end{restatable}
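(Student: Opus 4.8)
The plan is to establish a \emph{pointwise} upper bound on the integral $J[K] = \int_0^1 K(t)\,dt$ that holds for every quantile function $K$ in the support of the posterior $p(K \mid t_{1:n}, \ell_{1:n})$, and then to observe that such a bound is automatically inherited by the posterior expectation, uniformly over the prior $\pi$. The structural fact I would exploit is that any quantile function $K = F^{-1}$ is monotonically non-decreasing. The likelihood $\prod_i \delta(\ell_i - K(t_i))$ then forces $K$ to interpolate the ordered observations, $K(t_{(i)}) = \ell_{(i)}$, and since the individual loss is bounded by $B$, it also forces $K(t) \le B$ throughout $[0,1]$.

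First I would partition $[0,1]$ into the subintervals $[t_{(i-1)}, t_{(i)}]$ for $i = 1, \ldots, n+1$, using the conventions $t_{(0)} = 0$ and $t_{(n+1)} = 1$. On each subinterval with $i \le n$, monotonicity gives $K(t) \le K(t_{(i)}) = \ell_{(i)}$, which is exactly a right-rectangle (upper Riemann) bound; on the final subinterval $[t_{(n)}, 1]$ the boundedness constraint gives $K(t) \le B = \ell_{(n+1)}$. Integrating each inequality over its subinterval yields $\int_{t_{(i-1)}}^{t_{(i)}} K(t)\,dt \le (t_{(i)} - t_{(i-1)})\,\ell_{(i)} = u_i \ell_{(i)}$, and summing over $i$ telescopes the integral to give $J[K] \le \sum_{i=1}^{n+1} u_i \ell_{(i)}$.

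Because this inequality holds for every admissible $K$, and its right-hand side depends only on the observed losses and spacings rather than on $K$ itself, the conditional expectation $E(L \mid t_{1:n}, \ell_{1:n}) = \int J[K]\, p(K \mid t_{1:n}, \ell_{1:n})\,dK$ inherits the same bound whenever the posterior is supported on admissible quantile functions, which it is for any prior $\pi$. Taking the supremum over $\pi$ then preserves the inequality and delivers the claim.

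The main obstacle I anticipate is not the integration itself but justifying the two constraints that make the right-rectangle bound valid: that the posterior support consists only of \emph{non-decreasing} functions (so the right-endpoint value dominates on each subinterval) and that these functions are uniformly bounded by $B$ (so the final subinterval is controlled). Both follow from the definition of $K$ as the inverse CDF of a $[0,B]$-valued loss, but the argument should state explicitly that $\pi$ ranges only over priors supported on such quantile functions; otherwise the supremum over arbitrary priors could fail to be bounded.
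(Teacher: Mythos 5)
Your proof is correct, and it follows the paper's overall architecture --- partition $[0,1]$ at the ordered quantile levels, use monotonicity of the quantile function plus the bound $K(t)\le B$ on the last piece, and note that a bound holding for every $K$ consistent with the data is inherited by the posterior expectation uniformly in $\pi$ --- but you handle the key step more simply. Where you apply the pointwise right-endpoint bound $K(t)\le K(t_{(i)})=\ell_{(i)}$ on each subinterval and integrate (an upper-Riemann-sum argument), the paper instead poses each subinterval as a constrained variational maximization problem and solves it by Euler's discretization method (its Proposition A.5), concluding that the per-interval supremum is $(t_{(i)}-t_{(i-1)})\ell_{(i)}$. Your elementary route is shorter and fully sufficient for the stated inequality; what the paper's heavier machinery buys is the \emph{exact} supremum together with the explicit extremal step function $K^{*}_{t_{1:n},\ell_{1:n}}$ of equation \eqref{eq:worst_case_quantile_function}, i.e.\ tightness of the bound, and this worst-case quantile function is reused later in the proof of Theorem~\ref{thm:stochasticub}, so it is not idle generality in the paper's development even though Theorem~\ref{thm:consquantile} itself needs only the one-sided bound you prove. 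Two small points: your closing caveat about restricting $\pi$ to priors supported on quantile functions consistent with the data matches the paper's explicit assumption in Appendix~\ref{sec:theoretical_preliminaries} (the prior must place nonzero measure on $\mathcal{K}_n$), so you identified a genuine hypothesis rather than a gap; and the losses in the paper take values in $\halfclosed{-\infty}{B}$ rather than $[0,B]$ as you wrote, which is immaterial here since only the upper bound $B$ enters the argument.
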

\Cref{thm:consquantile} is based on the definite integral of the ``worst-case'' quantile function that is consistent with the observations. This strategy eliminates the need to specify a prior or evaluate an integral over functions $K$.  We now turn our attention to handling the uncertainty over the quantiles $t_{1:n}$.

\subsection{Random Quantile Spacings}\label{sec:methods_quantiles}

We now appeal to a result about distribution-free tolerance regions that characterizes the distribution of spacings between consecutive ordered quantiles. Knowledge of this distribution will allow us to handle the input noise in the quadrature problem.

\begin{restatable}[Distribution of Quantile Spacings~\protect{\citep[p.~140]{aitchison1975statistical}}]{lemma}{dirspacings}\label{thm:dirspacings}
Suppose that $\ell_1, \ldots, \ell_n$ are drawn i.i.d.\ with continuous\footnote{The correspondence to a Dirichlet distribution holds exactly for continuous distributions. Weighted sums of Dirichlet random variates stochastically dominate weighted sums of discrete quantile spacings, and thus due to space constraints we only consider continuous distributions here.} distribution function $F$. Let $t_i = F(\ell_{i})$ and $u_i = t_{(i)} - t_{(i-1)}$, where by convention $t_{(0)} = 0$ and $t_{(n+1)} = 1$. Then $(u_1, u_2, \ldots, u_{n+1}) \cong \Dir(1, \ldots, 1)$.
\end{restatable}
We are now ready to present our algorithm for bounding the expected loss $E(L \mid \ell_{1:n})$.
\subsection{Bound on Maximum Posterior Risk}\label{sec:dfub_construction}
Putting together \Cref{thm:dirspacings} and \Cref{thm:consquantile} allows us to bound the maximum posterior risk.
\begin{restatable}[]{theorem}{stochasticub}\label{thm:stochasticub}
Define $\ell_{(i)}$ to be the order statistics of $\ell_1, \ldots, \ell_n$ for $i = 1, \ldots, n$ and $\ell_{(n+1)} \triangleq B$. Let $L^{+}$ be the random variable defined as follows:
\begin{equation}
U_{1}, \ldots, U_{n+1} \sim \Dir(1, \ldots, 1),\,  L^{+} = \sum_{i=1}^{n+1} U_{i} \ell_{{(i)}}.\label{eq:l_plus_definition}
\end{equation}
Then for any $b \in \halfclosed{-\infty}{B}$,
\begin{equation}
\inf_\pi \Pr(L \le b \mid \ell_{1:n}) \ge \Pr(L^+ \le b).\label{eq:stochasticub_statement}
\end{equation}
\end{restatable}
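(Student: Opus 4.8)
The plan is to combine the deterministic worst-case bound of \Cref{thm:consquantile} with the prior-free distribution of quantile spacings from \Cref{thm:dirspacings}, conditioning first on the quantile levels $t_{1:n}$ and then integrating them out. The key structural observation is that the infimum $\inf_\pi$ ranges only over the prior $\pi$, which in our framework is placed over quantile functions $K$; the distribution of the levels $t_{1:n}$ is pinned down by \Cref{thm:dirspacings} and does \emph{not} depend on $\pi$. This lets me treat the two sources of randomness separately and commute the infimum with the integral over $t_{1:n}$.

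First I would strengthen \Cref{thm:consquantile} from an expectation bound to an almost-sure bound, which is what stochastic dominance actually requires. Fix $t_{1:n}$ and $\ell_{1:n}$ and write $u_i = t_{(i)} - t_{(i-1)}$. Any quantile function $K$ consistent with the constraints $K(t_{(i)}) = \ell_{(i)}$ is non-decreasing and bounded above by $B = \ell_{(n+1)}$, so on each interval $\halfclosed{t_{(i-1)}}{t_{(i)}}$ we have $K(t) \le \ell_{(i)}$. Integrating gives $J[K] = \int_0^1 K(t)\,dt \le \sum_{i=1}^{n+1} u_i \ell_{(i)} =: M(u)$ for every admissible $K$, so $L = J[K] \le M(u)$ holds pointwise, independent of $\pi$. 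Since $M(u)$ is a deterministic function of $t_{1:n}$, the inclusion $\{M(u) \le b\} \subseteq \{L \le b\}$ yields the conditional lower bound
\[
\Pr(L \le b \mid t_{1:n}, \ell_{1:n}) \ge \mathbbm{1}\{ M(u) \le b \},
\]
valid for every prior $\pi$.

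Next I would integrate out the levels $t_{1:n}$. Using the decomposition $\Pr(L \le b \mid \ell_{1:n}) = \int \Pr(L \le b \mid t_{1:n}, \ell_{1:n})\, p(t_{1:n}) \, dt_{1:n}$ together with the conditional bound above,
\[
\Pr(L \le b \mid \ell_{1:n}) \ge \int \mathbbm{1}\{M(u) \le b\}\, p(t_{1:n})\, dt_{1:n} = \Pr\Bigl( \sum_{i=1}^{n+1} u_i \ell_{(i)} \le b \Bigr),
\]
where by \Cref{thm:dirspacings} the spacings $u = (u_1,\ldots,u_{n+1})$ are distributed as $\Dir(1,\ldots,1)$. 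By the definition of $L^+$ in \eqref{eq:l_plus_definition}, the right-hand side is exactly $\Pr(L^+ \le b)$. Because this lower bound holds for every $\pi$ and the bounding distribution $\Dir(1,\ldots,1)$ is itself independent of $\pi$, taking the infimum over $\pi$ preserves the inequality and establishes \eqref{eq:stochasticub_statement}.

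The main obstacle I anticipate is justifying the clean separation of the two sources of randomness and, in particular, arguing that the distribution of $t_{1:n}$ is genuinely prior-free rather than an artifact of conditioning. The resolution is that $\inf_\pi$ acts only on the function-space prior while \Cref{thm:dirspacings} fixes $p(t_{1:n})$ as $\Dir$-spacings regardless of the underlying loss distribution; since the conditional lower bound $\mathbbm{1}\{M(u)\le b\}$ holds uniformly in $\pi$ and $M$ depends on the data only through the fixed order statistics $\ell_{(i)}$, the integral and the infimum commute with no further work. A secondary point worth verifying is the boundary behavior at $b = B$, where both sides equal $1$ (as $L^+ \le \sum_i u_i B = B$ almost surely), confirming consistency of the bound across the whole range $b \in \halfclosed{-\infty}{B}$.
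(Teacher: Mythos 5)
Your proposal is correct and takes essentially the same route as the paper's proof: you condition on the latent quantile levels $t_{1:n}$, lower-bound the conditional probability by the indicator $\mathbbm{1}\bigl\{\sum_{i=1}^{n+1} u_i \ell_{(i)} \le b\bigr\}$ using the worst-case consistent quantile function (the paper's $K^*_{t_{1:n},\ell_{1:n}}$ from \Cref{thm:aux1}), integrate against the prior-free $\Dir(1,\ldots,1)$ spacing distribution of \Cref{thm:dirspacings}, and pass $\inf_\pi$ through since the bound is uniform in $\pi$ --- exactly the paper's chain of inequalities. The only (harmless) difference is that you justify the pointwise bound $J[K] \le \sum_{i=1}^{n+1} u_i \ell_{(i)}$ directly from monotonicity of quantile functions rather than via the paper's variational argument (\Cref{thm:aux1a}), which is a simpler derivation of the same step.
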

\Cref{thm:stochasticub} states that $L^+$ stochastically dominates the posterior risk, which allows us to directly form upper confidence bounds as follows.
\begin{restatable}[]{corollary}{ucbdf}\label{thm:ucbdf}
  For any desired confidence level $\beta \in (0, 1)$, define
  \begin{equation}
    b^{*}_\beta = \inf_{b} \{ b : \Pr(L^{+} \le b \mid \ell_{1:n} ) \ge \beta \}.\label{eq:ucbdf}
  \end{equation}
  Then $\inf_\pi \Pr(L \le b \mid \ell_{1:n}) \ge \beta$ for any $b \ge b^{*}_{\beta}$.
\end{restatable}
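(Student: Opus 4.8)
The plan is to derive the corollary as a direct consequence of the stochastic dominance established in \Cref{thm:stochasticub}, combined with elementary monotonicity and right-continuity properties of the cumulative distribution function of $L^{+}$. Write $G(b) \triangleq \Pr(L^{+} \le b \mid \ell_{1:n})$ for the posterior CDF of $L^{+}$. As the CDF of a random variable, $G$ is non-decreasing and right-continuous, and by definition $b^{*}_{\beta}$ is the infimum of the super-level set $\{ b : G(b) \ge \beta \}$. The whole argument is then a short chaining of these facts with the bound already proved.

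The first step is to confirm that the infimum defining $b^{*}_{\beta}$ is attained, i.e.\ that $G(b^{*}_{\beta}) \ge \beta$. This follows from right-continuity: taking a sequence $b_{k} \downarrow b^{*}_{\beta}$ with $G(b_{k}) \ge \beta$ gives $G(b^{*}_{\beta}) = \lim_{k} G(b_{k}) \ge \beta$. Once this is in hand, monotonicity of $G$ immediately yields $G(b) \ge G(b^{*}_{\beta}) \ge \beta$ for every $b \ge b^{*}_{\beta}$.

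The second step is to transfer this lower bound on $G$ to the posterior risk $L$ via \Cref{thm:stochasticub}. For any $b$ with $b^{*}_{\beta} \le b \le B$, that theorem gives $\inf_{\pi} \Pr(L \le b \mid \ell_{1:n}) \ge \Pr(L^{+} \le b) = G(b) \ge \beta$, which is precisely the claim. It then remains to dispose of the range $b > B$: since $L^{+} = \sum_{i} U_{i}\ell_{(i)}$ is a convex combination of the $\ell_{(i)} \le B$, we have $L^{+} \le B$ almost surely, and likewise the posterior loss $L$ is an average of individual losses bounded by $B$, so $L \le B$; hence $\Pr(L \le b \mid \ell_{1:n}) = 1 \ge \beta$ holds trivially whenever $b > B$.

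The only real subtlety, and the step I would be most careful about, is the interface with the domain restriction $b \in (-\infty, B]$ appearing in \Cref{thm:stochasticub}. One must verify that $b^{*}_{\beta} \le B$, which follows from $L^{+} \le B$ a.s.\ (so $G(B) = 1 \ge \beta$), guaranteeing that the relevant range $[b^{*}_{\beta}, B]$ is nonempty, and then treat $b > B$ separately as above. Everything else reduces to routine use of monotonicity, right-continuity, and the stochastic-dominance inequality already available.
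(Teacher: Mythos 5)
Your proof is correct and takes essentially the same route as the paper's one-line argument, which likewise observes that $\Pr(L^{+} \le b \mid \ell_{1:n}) \ge \beta$ for all $b \ge b^{*}_{\beta}$ and substitutes into \eqref{eq:stochasticub_statement}. You additionally fill in two details the paper leaves implicit---right-continuity of the CDF of $L^{+}$ to get $G(b^{*}_{\beta}) \ge \beta$ at the infimum itself, and the separate treatment of $b > B$ forced by the domain restriction in \Cref{thm:stochasticub}---and both are handled correctly.
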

The critical value $b^{*}_\beta$ can be calculated by applying techniques for bounding linear combinations of Dirichlet random variables~\citep[p.~63]{ng2011dirichlet}. Alternatively, straightforward Monte Carlo simulation of $L^+$ is often sufficient, and is the approach we take in our experiments. An illustration is shown in Figure~\ref{fig:our_approach_sketch}.
\begin{figure*}[t]
    \centering
    \includegraphics[width=\linewidth]{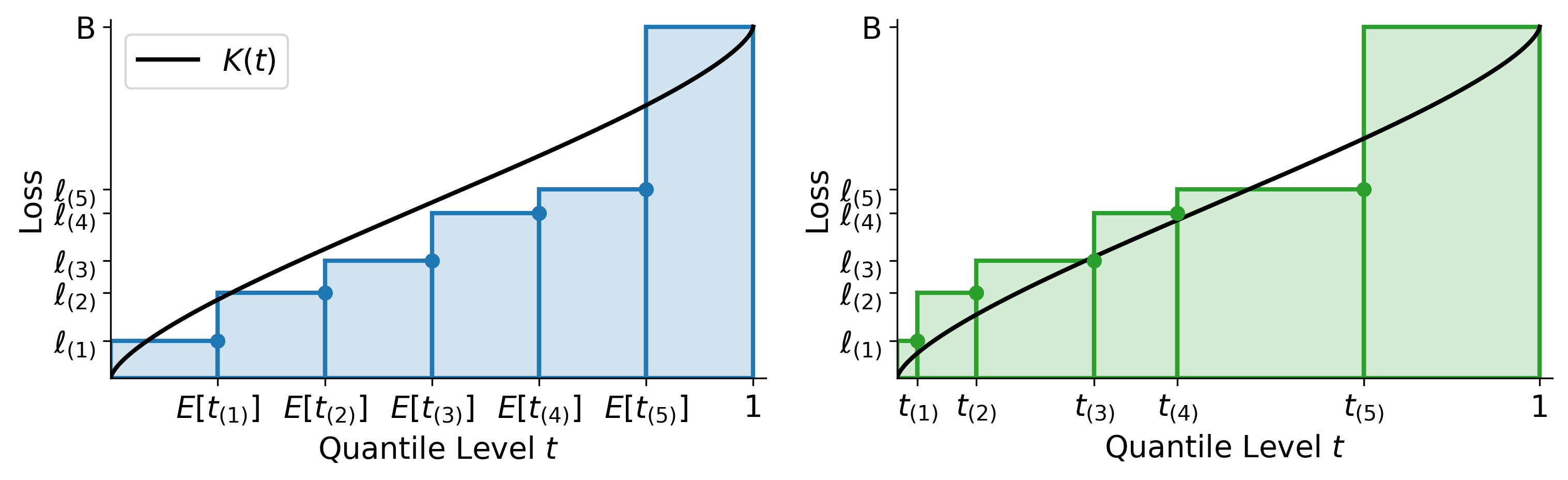}
    \vspace*{-6mm}
    \caption{Our Bayesian approach to conformal prediction accounts for the variability in quantile levels better than previous approaches. Left: Conformal Risk Control~\citep{angelopoulos2024conformal} considers only the expectation over the unobserved quantile values $t_1, \ldots, t_n$. This can underestimate the true expected loss (shown here: estimated expected loss $0.45$ vs.\ true expected loss $0.50$). Right: Our approach makes use of the fact that the quantile spacings are drawn from a Dirichlet distribution. By considering the full distribution over quantiles, we gain a more complete view of the expected loss. Shown here is one sample drawn from this distribution, which estimates the expected loss as $0.58$.}\label{fig:our_approach_sketch}
\end{figure*}

\subsection{Recovering Conformal Methods}\label{sec:our_recovery}
This perspective puts the previous distribution-free uncertainty techniques in a new light. Taking the expected value of $L^+$, we find
\begin{equation}
    E(L^+) = \sum_{i=1}^{n+1} E(U_i) \ell_{(i)} = \frac{1}{n + 1} \left( \sum_{i=1}^n \ell_i + B \right).
\end{equation}
The Conformal Risk Control decision rule \eqref{eq:crcrule_formula} then is simply the infimum over $\lambda$ for which $E(L^+) \le \alpha$.

For, split conformal prediction, the individual loss is defined as $\ell_i = 1 - \mathbbm{1} \{ s_i \le \lambda \}$. Therefore, suppose that $\lambda = s_{(k)}$. The expected value of $L^+$ then becomes:
\begin{align}
    E(L^+) &= \frac{1}{n+1} \left(n + 1 - \sum_{i=1}^n \mathbbm{1}\{ s_i \le s_{(k)}\} \right) \\
    &= 1 - \frac{k}{n + 1}
\end{align}
Therefore, $E(L^+) \le \alpha$ is satisfied whenever $k \ge (n + 1)(1 - \alpha)$, and in particular by $k^* = \lceil (n + 1) (1 - \alpha) \rceil$. This recovers \eqref{eq:scprule_formula} when $\lceil (n + 1)( 1- \alpha) \rceil \le n$.

Putting these results together, we have recovered standard conformal prediction techniques but have the additional flexibility of considering the distribution of $L^+$ rather than the expected value alone. Our experiments explore the value of this approach.

\section{Experiments}

The primary goal of our experiments is to demonstrate the utility of producing a posterior distribution over the expected loss. We conduct experiments on both synthetic data and calibration data collected from MS-COCO~\citep{lin2014microsoft}. For each data setting, we randomly generate $M = \num{10000}$ data splits.  Each method is used to select $\lambda$ with the goal of controlling the risk such that $R(\theta, \lambda) \le \alpha$ for unknown $\theta$. We compare algorithms on the basis of both the relative frequency of incurring risk greater than $\alpha$ and the prediction set size of the chosen $\lambda$. The ideal algorithm would select $\lambda$ such that the relative frequency of exceeding the target risk is at most a target failure rate of $1 - \beta = 0.05$ while minimizing prediction set size.

As demonstrated in Section~\ref{sec:our_recovery}, our method recovers conformal risk control by taking the expected value of $L^+$. Therefore, in order to demonstrate the effect of targeting a conditional guarantee (as opposed to a marginal one as in conformal risk control), we use our Bayesian quadrature-based method to compute the decision rule based on the one-sided highest posterior density (HPD) interval:
\begin{equation}
    \lambda^{\beta}_{\text{hpd}} \triangleq \inf_\lambda \{ \lambda : \Pr(L^+ \le \alpha \mid \ell_{1:n}) \ge \beta \},
\end{equation}
by finding the corresponding critical values $b_\beta^*$ according to~\eqref{eq:ucbdf} via Monte Carlo simulation of Dirichlet random variates with $\num{1000}$ samples. \rev{We include Risk-controlling Prediction Sets (RCPS)~\citep{bates2021distributionfree} with Hoeffding upper confidence bound as an additional baseline.} \rev{Code for our experiments is publicly available on Github.}\footnote{\url{https://github.com/jakesnell/conformal-as-bayes-quad}}
 
\subsection{Synthetic \rev{Binomial} Data}

We first sample directly from a known loss distribution so that we can directly compute the frequency of excessively large risk. Here the loss distribution is chosen to be a scaled binomial distribution, normalized to have a maximum loss of $B = 1$ and probability of failure set to $1 - \lambda$. This was simulated by computing
\begin{equation}
    \ell(z_i, \lambda) = \frac{1}{K} \sum_{k=1}^{K} \mathbbm{1}\{V_{ik} > \lambda \},\label{eq:experimental_loss}
\end{equation}
where $V_{ik} \sim \Uniform(0, 1)$ for $i = 1, \ldots, n$ and $k = 1, \ldots, K$. This loss is therefore monotonically non-increasing in $\lambda$ and achieves zero loss at $\lambda_\text{max} = 1$. We set $n$ = 10, $K = 4$, and $\alpha = 0.4$.

Since the expectation of the loss \eqref{eq:experimental_loss} is $1 - \lambda$, any trial for which $\lambda < 0.6$ constitutes a risk exceeding the $\alpha$ threshold. The relative frequency of trials exceeding this risk threshold are tabulated in Table~\ref{tab:risk_count}.
\begin{table}[tb]
    \centering
    \begin{threeparttable}
        \caption{Relative frequency of trials (out of 10,000) for which the resulting decision rule $\lambda$ exceeded the target risk threshold $\alpha$. }
        \begin{tabular}{lccc}
        \toprule
        Decision Rule &  Relative Freq. & 95\% CI \\
        \midrule
        CRC & 21.20\% & [20.40\%, 22.01\%] \\
        RCPS & 0.00\% & [0.00\%, 0.04\%] \\
        Ours ($\beta = 0.95$) & 0.03\% & [0.01\%, 0.09\%] \\
        \bottomrule
        \end{tabular}
        \begin{tablenotes}
        \item Note: Error bars are computed as 95\% Clopper-Pearson confidence intervals for binomial proportions.
        \end{tablenotes}
    \end{threeparttable}
    \label{tab:risk_count}
\end{table}
A histogram of the chosen $\lambda$ for each of the methods across all 10,000 trials is shown in Figure~\ref{fig:binomial_comparison}. For conformal risk control, the mean risk across all trials was $0.3363 \pm 0.0007$ and for our approach $\lambda_{\text{hpd}}^{0.95}$ the mean risk was $0.1758 \pm 0.0006$. In order to visualize the distribution of $L^+$, we plot a histogram of $L^+$ according to~\eqref{eq:l_plus_definition} estimated with 100,000 Dirichlet samples for three settings of $\lambda \in \{0.7, 0.8, 0.9\}$. The results are shown in  Figure~\ref{fig:posterior}.

\begin{figure*}[t]
  \centering
  \includegraphics[width=\linewidth]{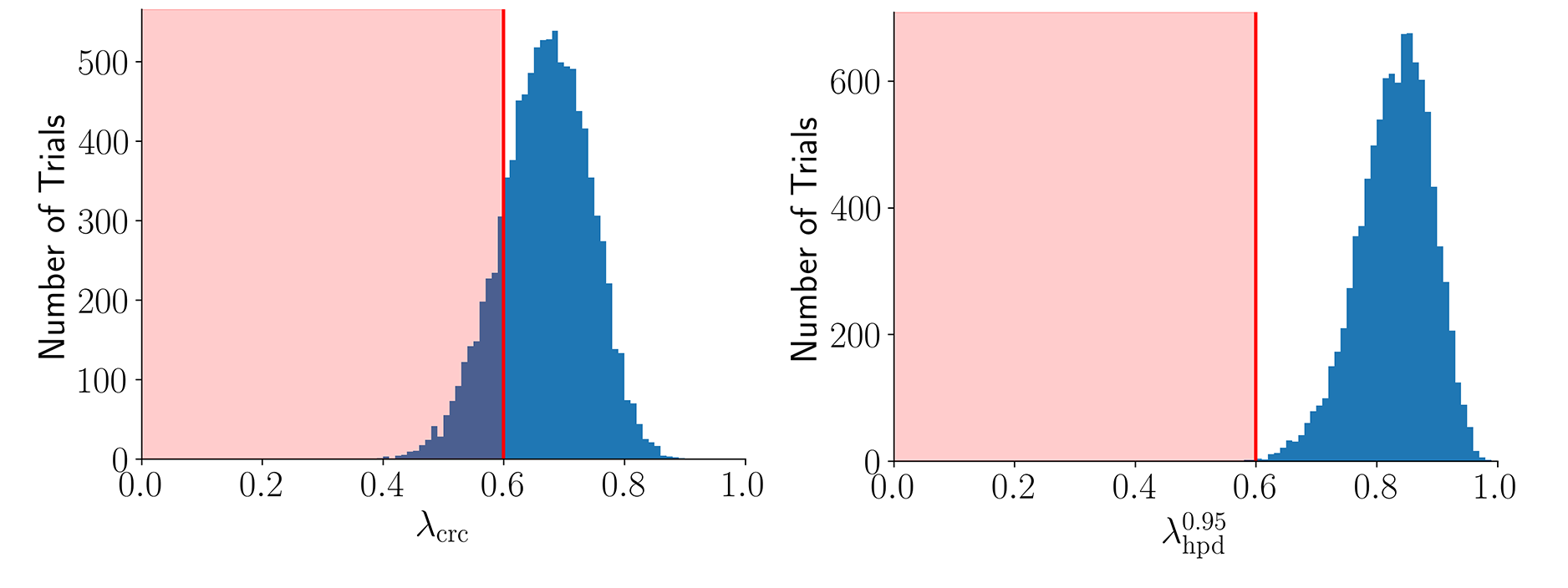}
  \vspace*{-6mm}
  \caption{Comparison of risk incurred by each procedure across multiple trials. Left: Histogram of the decision rule $\lambda_\text{crc}$ chosen by Conformal Risk Control across $M$ = 10,000 randomly sampled calibration sets. The region where per-trial risk exceeds $\alpha$ is highlighted in red.  Right: Histogram of the $\lambda_\text{hpd}^{0.95}$ chosen according to our 95\% Bayesian posterior interval.}\label{fig:binomial_comparison}
\end{figure*}

\begin{figure}[tb]
  \centering
  \includegraphics[width=\linewidth]{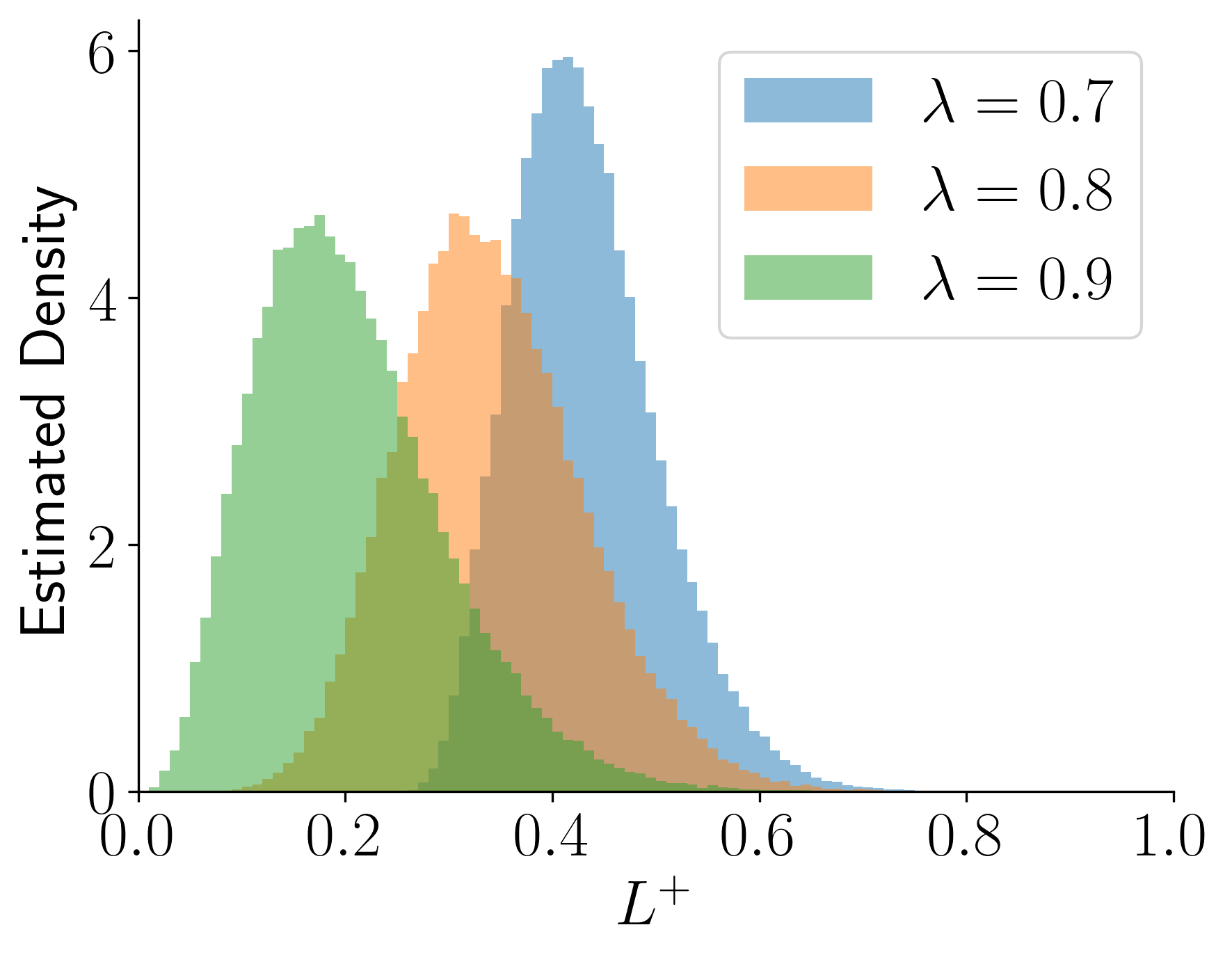}
  \vspace*{-6mm}
  \caption{Probability density for $L^+$ with $\lambda \in \{0.7, 0.8, 0.9\}$ estimated using 100,000 Dirichlet samples. }\label{fig:posterior}
\end{figure}

\subsection{\rev{Synthetic Heteroskedastic Data}}
\rev{In this experiment we also use 10,000 random trials. We use $n=200$ calibration samples each. To achieve heteroskedasticity, we let $X \sim U[0, 4]$ and $Y \mid X \sim \mathcal{N}(0, X^2)$. The prediction intervals are then formed as $[-\hat{\lambda}, \hat{\lambda}]$ where $\hat{\lambda}$ is selected by each method. The loss is the miscoverage loss and the target loss is set to $\alpha = 0.1$ (i.e.\ 90\% coverage). The maximum allowable risk failure rate is set to 5\% (i.e.\ $\beta = 0.95$). The results are show in Table~\ref{tab:risk_count_heteroskedastic}.}
\begin{table*}[tb]
    \centering
    \rev{
    \begin{threeparttable}
        \caption{Relative frequency of trials (out of 10,000) for which the resulting decision rule $\lambda$ exceeded the target risk threshold $\alpha$ in the synthetic heteroskedastic experiment. }
        \begin{tabular}{lcccc}
        \toprule
        Decision Rule &  Relative Freq. & 95\% CI & Mean Prediction Interval Length \\
        \midrule
        Split Conformal Prediction / CRC & 46.19\% & [45.21\%, 47.17\%] & 7.99 \\
        RCPS & 0.0\% & [0.0\%, 0.04\%] & 14.29 \\
        Ours ($\beta = 0.95$) & 3.42\% & [3.07\%, 3.80\%] & 9.50 \\
        \bottomrule
        \end{tabular}
        \begin{tablenotes}
        \item Note: Error bars are computed as 95\% Clopper-Pearson confidence intervals for binomial proportions.
        \end{tablenotes}
    \end{threeparttable}
    \label{tab:risk_count_heteroskedastic}
    }
\end{table*}

\subsection{False Negative Rate on MS-COCO}

We also compare methods on controlling the false negative rate of multilabel classification on the MS-COCO dataset~\citep{lin2014microsoft}. The experimental setup mirrors that used by~\citet[Section~5.1]{angelopoulos2023conformal}. Each random split contains $\num{1000}$ calibration examples and $\num{3952}$ test examples.  The results of this experiment are summarized in Table~\ref{tab:risk_count_mscoco}.

\begin{table}[tb]
    \centering
    \begin{threeparttable}
        \caption{Results on MS-COCO comparing relative frequency of trials for which the resulting decision rule $\lambda$ exceeded the target risk threshold $\alpha$ and average prediction set size.}
        \begin{tabular}{lcc}
        \toprule
        Method &  Relative Freq. & Pred. Set Size\\
        \midrule
        CRC & 45.05\% & 2.92 \\
        RCPS & 0.0\% & 3.57 \\
        Ours ($\beta = 0.95$) & 5.43\% & 3.04 \\
        \bottomrule
        \end{tabular}
    \end{threeparttable}
    \label{tab:risk_count_mscoco}
\end{table}

\section{Discussion}\label{sec:discussion}

Our results in Table~\ref{tab:risk_count} demonstrate that even though the Conformal Risk Control marginal guarantee holds, a significant number of individual trials (21.20\%) may incur risk exceeding the target threshold. In contrast, by using the more conservative HPD criterion, very few of the trials (0.03\%) exceeded the target risk. In Table~\ref{tab:risk_count_heteroskedastic}, both RCPS and our method achieve failure rate below the target of 5\% but our method achieves significantly smaller prediction intervals.

These results point to the qualitative difference in a marginal guarantee, which averages over many possible yet unobserved data sets vs.\ a conditional guarantee which focuses on knowledge about the state of nature conditioned on the calibration data actually observed. \rev{Previous work on conditional guarantees~\citep{barber2021limits,gibbs2024conformal} has focused on input-conditional guarantees, where the guarantee is conditioned on  for all  in the input domain. Guarantees of this nature have been shown to be generally impossible without stronger distribution assumptions. Our guarantees are perhaps better characterized by the term ``data-conditional guarantee'', where we condition on the set of observed loss values. Our experiments demonstrate the practical benefits of this by achieving decisions that produce smaller prediction sets and intervals while not violating the constraint on maximum allowable failure rate. Our guarantees, in contrast, do not rely on strong distribution assumptions that would be necessary to produce an input-conditional guarantee.}

The results are again confirmed in Table~\ref{tab:risk_count_mscoco} on MS-COCO, which show that the marginal guarantees of Conformal Risk Control lead to an even greater percentage of trials exceeding the risk threshold. On the other hand, RCPS is able to control the risk but this comes at the cost of larger prediction sets. Our approach successfully balances these two concerns\rev{, producing prediction intervals that are shorter than baselines while not exceeding the maximum acceptable failure rate.} It is also clear that the distribution of the expected loss upper bound $L^+$ in Figure~\ref{fig:posterior} provides a more complete view of the range of possible losses and its dependence on $\lambda$, a perspective that is not offered by previous methods.

\rev{Our goal in this work is to show that the Bayesian viewpoint unlocks a richer interpretation compared to previous works, which focus on marginal guarantees that as we have shown in the paper correspond to the posterior mean. In order to draw an explicit correspondence between our work and previous approaches, the dependence on the prior was removed in Section~\ref{sec:consquantile}. The intuition is that that any rational decision maker operating according to the rules of probability, regardless of prior (sufficiently expressive), would agree with the upper-bounding distribution of  we derive. Naturally, commitment to a specific choice of prior would lead to tighter distributions over the posterior risk, and in future work we seek to bridge these fields even further by exploring specific choices of priors over quantile functions.}

The limitations of our method lie primarily in the two main assumptions it makes. First, it assumes that the data at deployment time are independent and identically distributed to the calibration data. Second, it assumes an upper bound $B$ on the losses. If either of these assumptions do not hold, then the guarantees produced by our method are no longer valid. Additionally, the bounds produced by our method are conservative in the sense that they hold for any choice of prior for the loss distribution (provided that the prior is consistent with the calibration data). Therefore, if the two aforementioned assumptions do hold, the actual loss values may be significantly less than indicated by our method.

Overall, our approach demonstrates how conformal prediction techniques can be recovered and extended using Bayesian probability, all without having to specify a prior distribution. This Bayesian formulation is highly flexible due to its nonparametric nature, yet is amenable to incorporating specific information about the distribution of losses likely to be encountered. In practical applications, maximizing the risk with respect to all possible priors may be too conservative, and thus future work may explore the effect of specific priors on the risk estimate.

\section{Related Work}

\paragraph{Statistical Prediction Analysis.} Statistical prediction analysis~\citep{aitchison1975statistical} deals with the use of statistical inference to reason about the likely outcomes of future prediction tasks given past ones. Within statistical prediction analysis, the area of distribution-free prediction assumes that the parameters or the form of the distributions involved cannot be identified. This idea can be traced back to \citet{wilks1941determination}, who constructed a method to form distribution-free tolerance regions. \citet{tukey1947nonparametric,tukey1948nonparametric} generalized distribution-free tolerance regions and introduced the concept of statistically equivalent blocks, which are analogous to the intervals between consecutive order statistics of the losses. Much of the relevant theory is summarized by~\citet{guttman1970statistical}, and the Dirichlet distribution of quantile spacing is discussed by~\citet{aitchison1975statistical}. We build upon these works by connecting them to Bayesian quadrature and applying them in the more modern context of distribution-free uncertainty quantification.

\paragraph{Bayesian Quadrature.} The use of Bayesian probability to represent the outcome of a arbitrary computation is termed \emph{probabilistic numerics}~\citep{cockayne2019bayesian,hennig2022probabilistic}. Since our approach is fundamentally based on integration, we focus primarily on the relationship with the more narrow approach of Bayesian quadrature, which employs Bayes rule to estimate the value of an integral. A lucid overview of this approach is discussed under the term \emph{Bayesian numerical analysis} by \citet{diaconis1988bayesian}, who traces it back to the late nineteenth century \citep{poincare1896calcul}. The use of Gaussian processes in performing Bayesian quadrature is discussed in detail by \citet{ohagan1991bayes}. Our approach is formulated similarly but differs in two main ways: (a) we use a conservative bound instead of an explicit prior, and (b) we have input noise induced by the random quantile spacings.

\paragraph{Distribution-Free Uncertainty Quantification.}
Relevant background on distribution-free uncertainty quantification techniques is discussed in Section~\ref{sec:background_conformal}. A recent and comprehensive introduction to conformal prediction and related techniques may be found in~\citep{angelopoulos2023conformal}. Some recent works, like ours, also make use of quantile functions~\citep{snell2023quantile,farzaneh2024quantile} but remain grounded in frequentist probability.  Separately, Bayesian approaches to predictive uncertainty are popular~\citep{hobbhahn2022fast} but make extensive assumptions about the form of the underlying predictive model. To our knowledge, we are the first to apply statistical prediction analysis and Bayesian quadrature in order to analyze the performance of black-box predictive models in a distribution-free way.

\section{Conclusion}

Safely deploying black-box predictive models, such as those based on deep neural networks, requires developing methods that provide guarantees of their performance.  Existing techniques for solving this problem are based on frequentist statistics, and  %
are thus difficult to extend to incorporate knowledge about the situation in which models may be deployed. In this work we provided a Bayesian alternative to distribution-free uncertainty quantification, showing that two popular existing methods are special cases of this approach. Our results show that Bayesian probability can be used to extend uncertainty quantification techniques, making their underlying assumptions more explicit, allowing incorporation of additional knowledge, and providing a more intuitive foundation for constructing performance guarantees that avoid overly-optimistic guarantees that can be produced by existing methods.

\section*{Impact Statement}

This paper introduces a practical algorithm for computing a posterior distribution for the expected loss based on the observed losses from a set of calibration data. The intended purpose of this algorithm is for the posterior distribution to inform deployment decisions of black-box predictive systems (e.g. deep neural networks) in safety-critical applications. Our method makes use of certain assumptions, discussed in Section~\ref{sec:discussion}, which if violated will lead to guarantees that may no longer hold. In particular, it is important to ensure proper monitoring to detect distribution shift between calibration and deployment.

\section*{Acknowledgements}

\rev{The authors would like to thank the anonymous reviewers for helpful comments.} This work was supported by grant N00014-23-1-2510 from the Office of Naval Research.

\bibliography{main}
\bibliographystyle{icml2025}

\newpage
\appendix
\onecolumn

\section{Theoretical Preliminaries}\label{sec:theoretical_preliminaries}

\subsection{Review of Problem Setup}
We first review some relevant aspects of our problem setup.
\paragraph{Loss Function.} We assume an upper bound on the losses: $\ell_i \in \halfclosed{-\infty}{B}$ for $i = 1, \ldots, n$. We assume the same upper bound for $\ell_\text{new}$.

\paragraph{Bayesian Quadrature of Quantile Functions.} Recall Bayes rule for quantile functions:
\begin{equation}
    p(K \mid t_{1:n}, \ell_{1:n}) \propto \pi(K) \prod_{i=1}^n \delta(\ell_i - K(t_i)),\label{eq:posterior_review}
\end{equation}
where $\delta$ is the Dirac delta function. The prior $\pi(K)$ is assumed to be sufficiently expressive to have nonzero measure for the set $\mathcal{K}_n$ of quantile functions such that $K(t_i) = \ell_i$ for $i = 1, \ldots, n$ and $K \in \mathcal{K}_n$. This is necessary to prevent the posterior distribution in~\eqref{eq:posterior_review} from becoming degenerate.

\subsection{Background}
We begin by recalling some basic properties of distribution functions and quantile functions.

\begin{restatable}[Properties of Distribution Functions~\protect{\citep[p.~4]{shao2003mathematical}}]{proposition}{dfproperties}\label{thm:dfproperties}
Let $F(x) = \Pr(X \le x)$ be a distribution function. Then $F(-\infty) = \lim_{x \rightarrow -\infty} F(x) = 0$, $F(\infty) = \lim_{x \rightarrow \infty} F(x) = 1$, $F$ is nondecreasing (i.e., $F(x) \le F(y)$ if $x \le y$), and $F$ is right continuous (i.e., $\lim_{y \rightarrow x, y>x} F(y) = F(x)$).
\end{restatable}

Let $F$ be a distribution function and $K(t) \equiv F^{-1}(t) = \inf \{x : F(x) \ge t \} $ be the corresponding quantile function.

\begin{restatable}[Quantile Functions are Nondecreasing]{proposition}{qfmonotonic}\label{thm:qfmonotonic}
If $t \le u$, then $K(t) \le K(u)$.
\end{restatable}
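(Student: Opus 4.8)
The plan is to reduce the claim to the elementary fact that the infimum of a set is antitone with respect to inclusion. First I would work directly from the definition $K(t) = \inf\{x : F(x) \ge t\}$ and observe that, for $t \le u$, the defining set for $K(u)$ is contained in the defining set for $K(t)$: if $F(x) \ge u$, then since $u \ge t$ we also have $F(x) \ge t$, so $\{x : F(x) \ge u\} \subseteq \{x : F(x) \ge t\}$. This containment is the entire content of the hypothesis $t \le u$.

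Next I would invoke the general principle that if $A \subseteq B$ then $\inf B \le \inf A$, which holds because every lower bound for $B$ is in particular a lower bound for $A$, so the greatest lower bound for $A$ is at least as large. Applying this with $A = \{x : F(x) \ge u\}$ and $B = \{x : F(x) \ge t\}$ yields $K(t) = \inf B \le \inf A = K(u)$, which is exactly the desired inequality.

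I do not anticipate any genuine obstacle: the argument is purely set-theoretic and uses only the definition of $K$. The one point requiring minor care is the direction of the reasoning—a \emph{larger} set has a \emph{smaller} (or equal) infimum—so I would state the inclusion and the corresponding reversal explicitly to avoid sign confusion. Notably, no properties of $F$ beyond its being a real-valued function are needed here; the monotonicity and right-continuity of $F$ from Proposition~\ref{thm:dfproperties} play no role in this particular claim.
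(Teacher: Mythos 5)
Your proof is correct and is essentially identical to the paper's: both establish the inclusion $\{x : F(x) \ge u\} \subseteq \{x : F(x) \ge t\}$ from $t \le u$ and conclude $K(t) \le K(u)$ by the antitonicity of the infimum under set inclusion. Your added remark that no properties of $F$ beyond being real-valued are needed is accurate, and consistent with the paper's proof never invoking Proposition~\ref{thm:dfproperties}.
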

\begin{proof}
Since $u \ge t$, it follows that $\{ x : F(x) \ge u \} \subseteq \{x : F(x) \ge t \}$. Taking the infimum of both sides yields
\begin{equation}
    \inf \{ x: F(x) \ge u \} \ge \inf \{ x : F(x) \ge t \} \Rightarrow  K(u) \ge K(t).
\end{equation}
\end{proof}

We also will make use of the probability integral transformation, which we state here for convenience.
\begin{restatable}[Probability Integral Transformation~\protect{\citep[p.~5]{shorack2009empirical}}]{proposition}{pit}\label{thm:pit}
    If $X$ has distribution function $F$, then
    \begin{equation}
        \Pr( F(X) \le t) \le t \qquad \text{ for all } 0 \le t \le 1,
    \end{equation}
    with equality failing if and only if $t$ is not in the closure of the range of $F$. Thus if $F$ is continuous, then $T = F(X)$ is $\Uniform(0, 1)$.
\end{restatable}

\section{Proof of Results from the Main Paper}\label{sec:appendix_proofs}
\subsection{Proof of Proposition~\ref{thm:scprule}}
Recall that $L_{\text{scp}}(\theta, \lambda)$ is the \emph{miscoverage loss}:
\begin{align}
  L_{\text{scp}}(\theta, \lambda) &= \Pr \{ s(z_{\text{new}}) > \lambda \}\label{eq:appendix_miscoverage_loss}\\
  &= 1 - \Pr \{ s(z_{\text{new}}) \le \lambda \} \nonumber \\
  &= 1 - \int \mathbbm{1} \{ s(z_{\text{new}}) \le \lambda \} f(z_{\text{new}} \mid \theta) \, d z_{\text{new}} \nonumber,
\end{align}
where $s$ is an arbitrary nonconformity function.
\scprule*
\begin{proof}
By \citet[Section 2]{lei2018distributionfree},
\begin{equation}
    \Pr(s_{\text{new}} \le \hat{q}_{1-\alpha}) \ge 1 - \alpha,
\end{equation}
where
\begin{equation}
    \hat{q}_{1-\alpha} = \begin{cases}
        s_{(\lceil (n+1)(1-\alpha) \rceil} & \text{if }\lceil (n+1)(1-\alpha) \rceil \le n \\
        \infty, & \text{otherwise}.
    \end{cases}
\end{equation}
\end{proof}
But $L_{\text{scp}}(\theta, \lambda) = 1 - \Pr(s_{\text{new}} \le \lambda \mid \theta )$, so for $\lambda = \hat{q}_{1-\alpha}$, $R(\theta, \lambda_\text{scp}) \le \alpha$. This statement not depend on $\theta$, and so $\bar{R}(\lambda_\text{scp}) \le \alpha$.

\subsection{Proof of Proposition~\ref{thm:crcrule}}
Recall that the $L_{\text{crc}}$ is defined as:
\begin{equation}
  L_{\text{crc}}(\theta, \lambda) = \int \ell(z_{\text{new}}, \lambda) f(z_{\text{new}} \mid \theta)\, d z_{\text{new}},
\end{equation}
where $\ell(z_{\text{new}}, \lambda)$ is an individual loss function that is monotonically non-increasing in $\lambda$.
\crcrule*
\begin{proof}
Let $L_1, \ldots, L_n, L_{n+1}$ be an exchangeable collection of non-increasing random functions $L_i : \Lambda \rightarrow \halfclosed{-\infty}{B}$. By~\citet[Theorem 1]{angelopoulos2024conformal},
\begin{equation}
    \mathbb{E}[L_{n+1}(\hat{\lambda})] \le \alpha,\label{eq:supplemental_crc_guarantee}
\end{equation}
where
\begin{equation}
    \hat{\lambda} = \inf \left\{\lambda : \frac{n}{n+1} \hat{R}_n(\lambda) + \frac{B}{n+1} \le \alpha \right\}\label{eq:supplemental_crc_lambda_hat}
\end{equation}
and $\hat{R}_n(\lambda) = (L_1(\lambda) + \ldots + L_n(\lambda)) / n$.

Interpreting these results using the notation from Section~3 of the main paper, we identify:
\begin{itemize}
    \item $L_i(\lambda) = \ell(z_i, \lambda)$ for $i=1, \ldots, n$ and $L_{n+1}(\lambda) = \ell(z_\text{new}, \lambda)$,
    \item $\lambda_\text{crc}$ is identical to $\hat{\lambda}$ from~\eqref{eq:supplemental_crc_lambda_hat}, and
    \item \eqref{eq:supplemental_crc_guarantee} states that $R(\theta, \lambda_\text{crc}) \le \alpha$ for any $\theta$.
\end{itemize}
Therefore, $\bar{R}(\lambda_\text{crc}) = \sup_\theta R(\theta, \lambda_\text{crc}) \le \alpha$.
\end{proof}

\subsection{Proof of Theorem~\ref{thm:consquantile}}
In order to prove Theorem~\ref{thm:consquantile}, we will need to make use of two auxiliary propositions (\Cref{thm:aux1a} and \Cref{thm:aux1}). We state and prove these first, and then proceed to prove Theorem~\ref{thm:consquantile}.

\begin{restatable}[]{proposition}{aux1a}\label{thm:aux1a}
Consider the following variational maximization problem:
\begin{equation}
    I[f] = \int_a^b f(x) \,dx
\end{equation}
subject to $f(a) = f_a$, $f(b) = f_b$, and $f_a \le f(x) \le f_b$ for all $x \in [a, b]$, where $f_a \le f_b$.  Then $I[f]$ is maximized by
\begin{equation}
    f^*(x) = \begin{cases}
        f_a & \text{if } x = a, \\
        f_b & \text{otherwise},
    \end{cases}\label{eq:generic_variational_solution}
\end{equation}
and $I[f^*] = (b - a) f_b$.
\end{restatable}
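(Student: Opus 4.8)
The plan is to establish the result in two short steps: first a pointwise upper bound on the objective that holds for \emph{every} admissible $f$, and then a verification that the proposed $f^*$ is itself admissible and attains this bound.

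First I would exploit the constraint $f(x) \le f_b$ for all $x \in [a,b]$, which is the only part of the feasible set that actually controls the integral from above. By monotonicity of the integral, every admissible $f$ satisfies $I[f] = \int_a^b f(x)\,dx \le \int_a^b f_b\,dx = (b-a)f_b$. This shows that $(b-a)f_b$ is an upper bound for all feasible values of $I$, and crucially it uses none of the structure of $f$ beyond the upper constraint.

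Next I would check that the $f^*$ defined in \eqref{eq:generic_variational_solution} is admissible and attains this bound. It satisfies the boundary conditions by construction: $f^*(a) = f_a$, and since $b \ne a$ we have $f^*(b) = f_b$; moreover $f_a \le f^*(x) \le f_b$ for all $x$ because $f_a \le f_b$ by hypothesis. To compute its integral, I note that $f^*$ and the constant function $f_b$ differ only at the single point $x = a$, a set of Lebesgue measure zero, so $I[f^*] = \int_a^b f_b\,dx = (b-a)f_b$. Combining with the first step, $f^*$ achieves the established upper bound and is therefore a maximizer.

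I do not expect any genuine obstacle here, as the whole argument reduces to monotonicity of the integral together with the fact that altering a function on a measure-zero set leaves its integral unchanged. The only point worth flagging explicitly is that when $f_a < f_b$ the maximum is attained only by a function discontinuous at $a$: the pointwise boundary constraint $f(a) = f_a$ pins down the value on a single point, which is exactly why the supremum is realized by the discontinuous $f^*$ rather than by any continuous competitor, yet this same measure-zero discrepancy is precisely what makes it irrelevant to $I[f^*]$.
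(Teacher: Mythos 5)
Your proof is correct, but it takes a genuinely different route from the paper. The paper proves Proposition~\ref{thm:aux1a} via Euler's finite-difference method from the calculus of variations: it partitions $[a,b]$ into $m+1$ subintervals, imposes the upper constraint through the slack substitution $f_j = f_b - \xi_j^2$, computes $\partial I/\partial \xi_k = -2\xi_k \Delta x$, passes to the limit to obtain the variational derivative $\delta I/\delta \xi = -2\xi$, and sets it to zero to recover $f(x) = f_b$ away from the boundary point $x=a$. Your argument replaces all of this machinery with two elementary observations: monotonicity of the integral gives $I[f] \le (b-a)f_b$ uniformly over the feasible set, and $f^*$ attains this bound because it differs from the constant $f_b$ only on the Lebesgue-null set $\{a\}$. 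Your route is shorter and, arguably, more airtight: stationarity of the discretized objective identifies a critical point but does not by itself certify a maximum (indeed $\xi = 0$ sits on the boundary of the slack parametrization), whereas your direct bound proves global optimality over all admissible $f$, continuous or not, with no limiting argument. What the paper's approach buys is thematic consistency --- the proposition is framed as a ``variational maximization problem'' and feeds into the quantile-function optimization of Proposition~\ref{thm:aux1}, so the authors treat it with variational tools --- but nothing in the downstream use requires that framing. One small point you handled correctly but should keep in mind: your remark that $f^*(b) = f_b$ ``since $b \ne a$'' implicitly assumes a nondegenerate interval; in the degenerate case $a = b$ the constraints force $f_a = f_b$ and the claim holds trivially, so no generality is lost.
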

\begin{proof}
    We apply Euler's method~\citep[Section 2.2]{kot2014first}, which approximates the variational problem as an $m$-dimensional problem and takes the limit as $m \rightarrow \infty$. Let the interval $[a, b]$ be divided into $m + 1$ subintervals of equal width $\displaystyle \Delta x = \frac{b - a}{m + 1}$. The objective functional can then be approximated as
    \begin{equation}
        I(f_1, \ldots, f_m) \equiv \sum_{j=0}^m f_j \Delta x,
    \end{equation}
    where $f_0 = f_a$ and $f_{m + 1} = f_b$ due to the boundary conditions. In order to handle the $f_a \le f(x) \le f_b$ constraint, we first impose $f(x) \le f_b$ and check if the solution also satisfies $f(x) \ge f_a$. To that end, we substitute $f_j = f_b - \xi_j^2$:
    \begin{equation}
        I(\xi_1, \ldots, \xi_m) = \sum_{j=0}^m (f_b - \xi_j^2) \Delta x.
    \end{equation}
    We then take partial derivatives with respect to $\xi_k$:
    \begin{equation}
        \frac{\partial I}{\partial \xi_k} = -2 \xi_k \Delta x \Rightarrow \frac{1}{\Delta x} \frac{\partial I}{\partial \xi_k} = -2 \xi_k.
    \end{equation}
    Taking the limit as $m \rightarrow \infty$ and $\Delta x \rightarrow 0$, the variational derivative becomes:
    \begin{equation}
        \frac{\delta I}{\delta \xi} = -2\xi.
    \end{equation}
    Setting $\displaystyle \frac{\delta I}{\delta \xi} = 0$ yields $\xi(x) = 0$, which recovers $f(x) = f_b$, except at $x = a$, where $f(a) = f_a$ by the boundary conditions. This recovers $f^*(x)$ from~\eqref{eq:generic_variational_solution}, which indeed satisfies $f(x) \ge f_a$. For $f^*$, it is evident that the value of the value of the functional is $I[f^*] = (b - a) f_b$.
\end{proof}

\begin{restatable}[]{proposition}{aux1}\label{thm:aux1}
Let $\mathcal{K}_n$ be the set of quantile functions for which $K(t_i) = \ell_i$ for $i = 1, \ldots, n$. Then
\begin{equation}
    \sup_{K \in \mathcal{K}_n} J[K] = \sum_{i=1}^{n+1} (t_{(i)} - t_{(i-1)}) \ell_{(i)},
\end{equation}
where $t_{(0)} = 0$, $t_{(n+1)} = 1$, $\ell_{(n+1)} = B$, and $J[K] \triangleq \int_0^1 K(t) \, dt$.

\end{restatable}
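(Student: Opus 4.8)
The plan is to exploit the monotonicity of quantile functions (\Cref{thm:qfmonotonic}) to partition the unit interval at the sorted quantile levels and then apply the variational result of \Cref{thm:aux1a} on each piece. First I would record the co-monotonicity of the constraints: since every $K \in \mathcal{K}_n$ is nondecreasing and satisfies $K(t_i) = \ell_i$, sorting the levels as $t_{(1)} \le \cdots \le t_{(n)}$ forces the losses to be sorted consistently, so that $K(t_{(i)}) = \ell_{(i)}$ for each $i$ (and the pairs are co-monotone, which is exactly what guarantees $\mathcal{K}_n$ is nonempty). With the conventions $t_{(0)} = 0$, $t_{(n+1)} = 1$, $\ell_{(n+1)} = B$, I would then decompose
\begin{equation}
  J[K] = \int_0^1 K(t)\,dt = \sum_{i=1}^{n+1} \int_{t_{(i-1)}}^{t_{(i)}} K(t)\,dt.
\end{equation}

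Next I would bound each summand. For an interior index $2 \le i \le n$, on $[t_{(i-1)}, t_{(i)}]$ monotonicity and the fixed endpoints give $\ell_{(i-1)} \le K(t) \le \ell_{(i)}$, so \Cref{thm:aux1a} with $a = t_{(i-1)}$, $b = t_{(i)}$, $f_a = \ell_{(i-1)}$, $f_b = \ell_{(i)}$ (noting $\ell_{(i-1)} \le \ell_{(i)}$) yields the per-interval supremum $(t_{(i)} - t_{(i-1)})\ell_{(i)}$. The two boundary intervals are handled the same way: on $[0, t_{(1)}]$ monotonicity supplies only the upper constraint $K(t) \le K(t_{(1)}) = \ell_{(1)}$, and on $[t_{(n)}, 1]$ it supplies $K(t) \ge \ell_{(n)}$ while the assumed loss bound supplies $K(t) \le B = \ell_{(n+1)}$; in both cases \Cref{thm:aux1a} returns $(t_{(i)} - t_{(i-1)})\ell_{(i)}$ because the optimal value $I[f^*] = (b-a)f_b$ depends only on the upper endpoint. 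Summing these per-interval values produces the claimed expression as an upper bound on $J[K]$ for every $K \in \mathcal{K}_n$.

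Tightness requires showing this upper bound is genuinely the supremum, and this is where I expect the main obstacle. The per-interval optimizers from \Cref{thm:aux1a} assemble into the single nondecreasing step function $K^*(t) = \ell_{(i)}$ for $t \in (t_{(i-1)}, t_{(i)}]$, which respects every constraint $K^*(t_{(i)}) = \ell_{(i)}$ and hence attains the bound. The delicate point is verifying that this decoupling is legitimate: that optimizing each piece independently does not violate the global monotonicity constraint linking the pieces, and that $K^*$ (or an approximating sequence, if one insists on the left-continuous $\inf$-definition of a quantile function) genuinely lies in $\mathcal{K}_n$. Because the constructed $K^*$ is itself nondecreasing and interpolates the required points, the decoupling is valid and the supremum of the sum equals the sum of the per-interval suprema, completing the argument. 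I would note that only the upper-bound direction is strictly needed for the downstream \Cref{thm:consquantile}, with the reverse inequality serving to show the bound cannot be improved.
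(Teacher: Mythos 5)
Your proposal is correct and follows essentially the same route as the paper's proof: decompose $\int_0^1 K(t)\,dt$ at the sorted quantile levels, apply \Cref{thm:aux1a} on each subinterval via the interchange $\sup \sum \le \sum \sup$, and establish tightness by assembling the per-interval optimizers into the nondecreasing step function $K^*$ that interpolates the constraints. Your explicit treatment of the two boundary intervals (where the upper constraint on $[t_{(n)},1]$ comes from the loss bound $B$ rather than an interpolation condition) is slightly more careful than the paper's uniform statement of boundary conditions, but it is a refinement of the same argument, not a different one.
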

\begin{proof}
By \Cref{thm:qfmonotonic}, quantile functions preserve orderings and therefore $K(t_{(i)}) = \ell_{(i)}$. We divide $J[K]$ into intervals with endpoints $(0, t_{(1)}), (t_{(1)}, t_{(2)}), \ldots, (t_{(n)}, 1)$:
\begin{align}
\sup_{K \in \mathcal{K}_n} J[K] &= \sup_{K \in \mathcal{K}_n} \int_0^1 K(t) \, dt \\
&= \sup_{K \in \mathcal{K}_n} \sum_{i=1}^{n+1} \int_{t_{(i-1)}}^{t_{(i)}}K(t) \, dt \\
&\le \sum_{i=1}^{n+1} \sup_{K \in \mathcal{K}_n} \int_{t_{(i-1)}}^{t_{(i)}}K(t) \, dt
\end{align}
By \Cref{thm:qfmonotonic}, $K(t_{(i-1)}) \le K(t) \le K(t_{(i)})$ for any $t \in [t_{(i-1)}, t_{(i)}]$. We view each term as a variational subproblem where $\displaystyle J_i[K_i] \triangleq \int_{t_{(i-1)}}^{t_{(i)}} K_i(t) \, dt$ with boundary conditions $K_i(t_{(i-1)}) = \ell_{(i-1)}$ and $K_i(t_{(i)}) = \ell_{(i)}$. We therefore appeal to \Cref{thm:aux1a} to conclude that
\begin{equation}
    K_i^*(t) =  \begin{cases}
        \ell_{(i-1)} & \text{if } t = t_{(i-1)}, \\
        \ell_{(i)} & \text{otherwise},
    \end{cases}
\end{equation}
and $J[K_i^*] = (t_{(i)} - t_{(i-1)}) \ell_{(i)}$. We therefore have
\begin{equation}
    \sup_{K \in \mathcal{K}_n} J[K] \le \sum_{i=1}^{n+1} (t_{(i)} - t_{(i-1)}) \ell_{(i)}.
\end{equation}
By composing $K_i^*$ from each subinterval, it is straightforward to see that the bound is tight for
\begin{equation}
    K^*_{t_{1:n}, \ell_{1:n}}(t) = \begin{cases}
        \ell_{(1)} & \text{ if } t \le t_{(1)} \\
        \ell_{(2)} & \text{ if } t_{(1)} < t \le t_{(2)} \\
        \ldots \\
        \ell_{(n)} & \text{ if } t_{(n-1)} < t \le t_{(n)} \\
        B & \text{ if } t > t_{(n)}.
    \end{cases}\label{eq:worst_case_quantile_function}
\end{equation}
$K^*_{t_{1:n}, \ell_{1:n}}$ is therefore the ``worst-case'' quantile function that is consistent with the observations, and $J[K^*_{t_{1:n}, \ell_{1:n}}] = \sum_{i=1}^{n+1} (t_{(i)} - t_{(i-1)}) \ell_{(i)}$.
\end{proof}

We are now ready to prove Theorem~\ref{thm:consquantile}.

\consquantile*
\begin{proof}
    Let $J[K] = \int_0^1 K(t) \, dt$. The conditional expected loss can be expressed as:
    \begin{align}
        E(L \mid t_{1:n}, \ell_{1:n}) &= \int J[K] p(K \mid t_{1:n}, \ell_{1:n}) \, dK \\
        &\le \sup_{K \in \mathcal{K}_n} J[K],
    \end{align}
    where $\mathcal{K}_n$ is the set of quantile functions for which $K(t_i) = \ell_i$ for $i = 1, \ldots, n$. By~\Cref{thm:aux1}, it follows that
    \begin{equation}
        E(L \mid t_{1:n}, \ell_{1:n}) \le \sum_{i=1}^{n+1} (t_{(i)} - t_{(i-1)}) \ell_{(i)} = \sum_{i=1}^{n+1} u_i \ell_{(i)}
    \end{equation}
\end{proof}

\subsection{Proof of Lemma~\ref{thm:dirspacings}}
\dirspacings*
\begin{proof}
By the probability integral transformation~(\Cref{thm:pit}), $T_i$ is $\Uniform(0, 1)$ for $i = 1, \ldots, n$.  Since the transformation from $(t_1, \ldots, t_n) \rightarrow (t_{(1)}, \ldots, t_{(n)})$ is a sorting operation where $n!$ permutations map to the same vector of order statistics, the probability density for  $t_{(1)}, \ldots, t_{(n)}$ is therefore
\begin{equation}
    f_{t_{(1:n)}}(t_{(1)}, \ldots, t_{(n)}) = n!, \qquad 0 \le t_{(1)} \le t_{(2)} \le \ldots \le t_{(n)} \le 1.
\end{equation}
If $u_{1:n} = G(t_{(1:n)})$ where $G$ is differentiable and invertible, then by change of variables the density for $u_{1:n}$ can be expressed as
\begin{equation}
    f_{u_{1:n}}(u_{1:n}) = f_{t_{(1:n)}}(G^{-1}(u_{1:n})) \left| \det \left( \frac{\partial}{\partial u_{1:n}} G^{-1}(u_{1:n}) \right) \right|.
\end{equation}
Observe that the inverse transformation $t_{(1:n)} = G^{-1}(u_{1:n})$ can be expressed as
\begin{equation}
    \begin{bmatrix}
    t_{(1)} \\ t_{(2)} \\ t_{(3)} \\ \vdots \\ t_{(n-1)} \\ t_{(n)}
    \end{bmatrix}
    =
    \begin{bmatrix}
    1 & 0 & 0 & \ldots & 0 & 0 \\
    1 & 1 & 0 & \ldots & 0 & 0 \\
    1 & 1 & 1 & \ldots & 0 & 0 \\
    \vdots & \vdots & \vdots & \ddots & \vdots & \vdots \\
    1 & 1 & 1 & \ldots & 1 & 0 \\
    1 & 1 & 1 & \ldots & 1 & 1 \\
    \end{bmatrix}
    \begin{bmatrix}
    u_{1} \\ u_{2} \\ u_{3} \\ \vdots \\ u_{n-1} \\ u_{n}
    \end{bmatrix}.
\end{equation}
Hence the absolute Jacobian of inverse transformation $t_{(1:n)} = G^{-1}(u_{1:n})$ is 1. The density of $u_{1:n}$ is therefore
\begin{equation}
    f_{u_{1:n}}(u_{1:n}) = f_{t_{(1:n)}}(G^{-1}(u_{1:n})) = n!, \quad \text{ where } u_i \ge 0 \text{ for } i = 1, \ldots, n \text{ and }\sum_{i=1}^n u_i \le 1.\label{eq:spacing_density}
\end{equation}
Recall that the Dirichlet density with parameter $\alpha_1, \ldots, \alpha_{n+1}$ is:
\begin{equation}
    \Dir(u_{1:n+1} \mid \alpha_{1:n+1}) = \frac{ \Gamma( \sum_{i=1}^{n+1} \alpha_i)}{\Gamma (\alpha_1) \ldots \Gamma(\alpha_{n+1})} \prod_{i=1}^{n+1} u_i^{\alpha_i - 1}, \quad \text{ where } u_i \ge 0 \text{ and } \sum_{i=1}^{n+1} u_i = 1.
\end{equation}
In particular, if $\alpha_1 = \alpha_2 = \ldots = \alpha_{n+1} = 1$,
\begin{equation}
    \Dir(u_{1:n+1} \mid 1, \ldots, 1) = \Gamma(n+1) = n!,
\end{equation}
which is identical to~\eqref{eq:spacing_density} with $u_{n+1} = 1 - u_1 - \ldots - u_n$. Therefore, $(u_1, u_2, \ldots, u_{n+1}) \cong \Dir(1, \ldots, 1)$.

\end{proof}

\subsection{Proof of Theorem~\ref{thm:stochasticub}}

\stochasticub*
\begin{proof}
\begin{align}
    \inf_\pi \Pr(L \le b \mid \ell_{1:n}) &= \inf_\pi \int \mathbbm{1} \left\{ J[K] \le b \right\} p(K \mid \ell_{1:n}) \, dK \\
    &= \inf_\pi \int \mathbbm{1} \left\{ J[K] \le b \right\} \left( \int p(K \mid t_{1:n}, \ell_{1:n}) p(t_{1:n} \mid \ell_{1:n}) \, dt_{1:n} \right)  \, dK \\
    &= \inf_\pi \int \left(\int \mathbbm{1} \left\{ J[K] \le b \right\} p(K \mid t_{1:n}, \ell_{1:n}) \, dK \right) p(t_{1:n} \mid \ell_{1:n}) \, dt_{1:n} \\
    &\ge  \int \left(\inf_\pi \int \mathbbm{1}\left\{ J[K] \le b \right\} p(K \mid t_{1:n}, \ell_{1:n}) \, dK \right) p(t_{1:n} \mid \ell_{1:n}) \, dt_{1:n} \\
    &\ge  \int \left(\inf_{K \in \mathcal{K}_n} \mathbbm{1}\left\{ J[K] \le b \right\} \right) p(t_{1:n} \mid \ell_{1:n}) \, dt_{1:n} \\
    &\ge  \int \mathbbm{1}\left\{ J[K^*_{t_{1:n}, \ell_{1:n}}] \le b \right\} p(t_{1:n} \mid \ell_{1:n}) \, dt_{1:n} \\
    &= \int \mathbbm{1}\left\{ \sum_{i=1}^{n+1} (t_{(i)} - t_{(i-1)}) \ell_{(i)} \le b \right\} p(t_{1:n} \mid \ell_{1:n}) \, dt_{1:n} \\
    &= \int \mathbbm{1}\left\{ \sum_{i=1}^{n+1} u_i \ell_{(i)} \le b \right\} p(u_{1:n+1} \mid \ell_{1:n}) \, du_{1:n+1} \\
    &= \Pr( L^+ \le b)
\end{align}
\end{proof}

\subsection{Proof of Corollary~\ref{thm:ucbdf}}

\ucbdf*
\begin{proof}
For any $b \ge b_\beta^*$, $\Pr(L^+ \le b \mid \ell_{1:n}) \ge \beta$. Substitution into \eqref{eq:stochasticub_statement} provides the desired result.
\end{proof}

\end{document}